\def\eqref#1{equation~\ref{#1}}
\def\ceil#1{\lceil #1 \rceil}
\def\1{\bm{1}}
\DeclareMathAlphabet{\mathsfit}{\encodingdefault}{\sfdefault}{m}{sl}
\SetMathAlphabet{\mathsfit}{bold}{\encodingdefault}{\sfdefault}{bx}{n}
\newcommand{\E}{\mathbb{E}}
\newcommand{\R}{\mathbb{R}}
\theoremstyle{plain}
\newtheorem{theorem}{Theorem}[section]
\newtheorem{proposition}[theorem]{Proposition}
\newtheorem{lemma}[theorem]{Lemma}
\newtheorem{corollary}[theorem]{Corollary}
\theoremstyle{definition}
\newtheorem{definition}[theorem]{Definition}
\theoremstyle{remark}
\newcommand{\expnumber}[2]{{#1}\mathrm{e}{#2}}
\def\fr#1#2{{\textstyle\frac{#1}{#2}}} 
\newcommand{\cF}{\mathcal{F}}
\newcommand{\cX}{\mathcal{X}}
\newcommand{\cY}{\mathcal{Y}}
\newcommand{\err}{\textnormal{err}}
\newcommand{\bI}{\mathbb{I}}
\DeclareMathOperator{\Avg}{Avg}
\icmltitlerunning{Norm-based Generalization Bounds for Compositionally Sparse Neural Networks}
\begin{document}

\twocolumn[
\icmltitle{Norm-based Generalization Bounds \\ for Compositionally Sparse Neural Networks}



\icmlsetsymbol{equal}{*}

\begin{icmlauthorlist}
\icmlauthor{Tomer Galanti}{MIT}
\icmlauthor{Mengjia Xu}{MIT,Brown}
\icmlauthor{Liane Galanti}{TAU}
\icmlauthor{Tomaso Poggio}{MIT}
\end{icmlauthorlist}

\icmlaffiliation{MIT}{Massachusetts Institute of Technology}
\icmlaffiliation{Brown}{Brown University}
\icmlaffiliation{TAU}{Tel-Aviv University}

\icmlcorrespondingauthor{Tomer Galanti}{galanti@mit.edu}
\icmlcorrespondingauthor{Tomaso Poggio}{tp@csail.mit.edu}

\icmlkeywords{Machine Learning, ICML}

\vskip 0.3in
]



\printAffiliationsAndNotice{\icmlEqualContribution} 

\begin{abstract}
In this paper, we investigate the Rademacher complexity of deep sparse neural networks, where each neuron receives a small number of inputs. We prove generalization bounds for multilayered sparse ReLU neural networks, including convolutional neural networks. These bounds differ from previous ones, as they  consider the norms of the convolutional filters instead of the norms of the associated Toeplitz matrices, independently of weight sharing between neurons.

As we show theoretically, these bounds may be orders of magnitude better than standard norm-based generalization bounds and empirically, they are almost non-vacuous in estimating generalization in various simple classification problems. Taken together, these results suggest that compositional sparsity of the underlying target function is critical to the success of deep neural networks.
\end{abstract}

\section{Introduction}\label{sec:intro}


Over the last decade, deep learning with large neural networks has greatly advanced the solution of a wide range of tasks including image classification~\citep{he2016residual,dosovitskiy2021an,zhai2021scaling}, language processing~\citep{NIPS2017_3f5ee243,devlin-etal-2019-bert,NEURIPS2020_1457c0d6}, interacting with open-ended environments~\citep{SilverHuangEtAl16nature,arulkumaran2019alphastar}, and code synthesis~\citep{chen2021evaluating}. Despite traditional theories~\cite{Vapnik1998}, recent findings~\citep{zhang2017understanding,Belkin2021FitWF} show that deep neural networks can generalize well even when their size far exceeds the number of training samples. 

To address this question, recent efforts in deep learning theory study the generalization performance of deep networks by analyzing the complexity of the learned function.

Recent work has suggested generalization guarantees for deep neural networks based on various norms of their weight matrices~\cite{pmlr-v40-Neyshabur15,2017arXiv171206541G,Bartlett2001RademacherAG,Harvey2017NearlytightVB, 10.5555/3295222.3295372,Neyshabur2018APA,https://doi.org/10.48550/arxiv.1905.13210,NEURIPS2019_860b37e2,https://doi.org/10.48550/arxiv.1905.03684,NEURIPS2019_62dad6e2,https://doi.org/10.48550/arxiv.1806.05159}. Many efforts have been made to improve the applicability of these bounds to realistic scales. Some studies have focused on developing norm-based generalization bounds for complex network architectures, such as residual networks~\cite{https://doi.org/10.48550/arxiv.1904.01367}. Other studies investigated ways to reduce the dependence of the bounds on the product of spectral norms~\cite{https://doi.org/10.48550/arxiv.1905.03684,nagarajan2018deterministic}, or to use compression bounds based on PAC-Bayes theory~\cite{zhou2018nonvacuous,lotfi2022pacbayes}, or on the optimization procedure used to train the networks~\cite{https://doi.org/10.48550/arxiv.1905.13210,https://doi.org/10.48550/arxiv.1901.08584,richards2021stability}. However, most of these studies have focused on fully-connected networks which empirically  have lower performance compared to other architectures. In particular, these studies cannot directly  explain the success of current successful architectures~\cite{726791,NIPS2017_3f5ee243,dosovitskiy2020image}.

To fully understand the success of deep learning, it is necessary to analyze a wider scope of architectures beyond fully-connected networks. An interesting recent direction~\cite{ledent,Long2020Generalization} suggests better generalization bounds for neural networks with shared parameters, such as convolutional neural networks. In fact,~\cite{ledent} was the first to show that convolutional layers contribute to generalization bounds with a norm component smaller than the norm of the associated linear transformation. However, many questions remain unanswered, including {\bf (a)} {\em Why certain compositionally sparse architectures, such as convolutional networks, perform better than fully-connected architectures?} {\bf (b)} {\em Is weight sharing necessary for the success of convolutional neural networks?} In this paper, we contribute to an understanding of both of these questions.

\subsection{Related Work} 

{\bf Approximation guarantees for multilayer sparse networks.\enspace} While fully-connected networks, including shallow networks, are universal approximators~\cite{Cybenko1989,Hornik1991ApproximationCO} of continuous functions, they are largely limited in theory and in practice.  Classic results~\cite{Mhaskar:1996:NNO:1362203.1362213,Maiorov99lowerbounds,10.1006/jath.1998.3305,10.1006/jath.1998.3304,https://doi.org/10.48550/arxiv.1710.11278} show that, in the worst-case, approximating $r$-continuously differentiable target functions (with bounded derivatives) using fully-connected networks requires $\Theta(\epsilon^{-d/r})$ parameters, where $d$ is the input dimension and $\epsilon$ is the approximation rate. The exponential dependence on $d$ is also known as the ``curse of dimensionality''. 

A recent line of work~\cite{10.5555/3298483.3298577,doi:10.1073/pnas.1907369117,Poggio2022} shows that the curse of dimensionality can be avoided by deep, sparse networks, when the target function is itself compositionally sparse. Furthermore, it has been conjectured that efficiently computable functions, that is functions that are computable by a Turing machine in polynomial time, are compositionally sparse. This suggests, in turns, that, for practical functions, deep and sparse networks can avoid the curse of dimensionality.

These results, however, lack any implication about generalization; in particular, they do not show that overparametrized sparse networks have good generalization properties.

{\bf Norm-based generalization bounds.\enspace} A recent thread in the literature~\citep{pmlr-v40-Neyshabur15,2017arXiv171206541G,Bartlett2001RademacherAG,Harvey2017NearlytightVB, 10.5555/3295222.3295372,Neyshabur2018APA,https://doi.org/10.48550/arxiv.1905.13210,NEURIPS2019_860b37e2,https://doi.org/10.48550/arxiv.1905.03684} has introduced norm-based generalization bounds for neural networks. In particular, let $S=\{(x_i,y_i)\}^{m}_{i=1}$ be a training dataset of $m$ independently drawn samples from a probability measure $P$ defined on the sample space $\mathcal{X}\times \mathcal{Y}$, where $\mathcal{X}\subset \R^{d}$ and $\mathcal{Y} =\{\pm 1\}$. A fully-connected network is defined as
\begin{equation}
f_w(x) ~=~ W^L \sigma(W^{L-1} \sigma(\dots \sigma(W^2\sigma(W^1x))\dots )),
\end{equation}
where $L$ is the depth of the network, $W^l \in \R^{d_{l+1}\times d_l}$ and $\sigma(x)$ is the element-wise ReLU activation function $\max(0,x)$. A common approach for estimating the gap between the train and test errors of a neural network is to use the Rademacher complexity of the network. For example, in~\citep{pmlr-v40-Neyshabur15}, an upper bound on the Rademacher complexity is introduced based on the norms of the weight matrices of the network of order $\mathcal{O}(\fr{2^L}{\sqrt{m}}\prod^{L}_{l=1}\|W^l\|_F)$. Later,~\cite{2017arXiv171206541G} showed that the exponential dependence on the depth can be avoided by using the contraction lemma and obtained a bound that scales with $\mathcal{O}(\sqrt{L})$. In~\cite{10.5555/3295222.3295372}, a Rademacher complexity bound based on covering numbers was introduced, which scales as $\tilde{\mathcal{O}}\left(\fr{\prod^{L}_{l=1}\|W^l\|_2}{\sqrt{m}} \cdot \left(\sum^{L}_{l=1}\fr{\|(W^l-M^l)^{\top}\|^{2/3}_{2,1}}{\|W^l\|^{2/3}_2}\right)^{3/2}\right)$, where $M^l\in \R^{d_{l+1}\times d_l}$ are fixed reference matrices and $\|\cdot \|_2$ is the spectral norm.

While these results provide solid upper bounds on the test error of deep neural networks, they only take into account very limited information about the architectural choices of the network. In particular, when applied to convolutional networks, the matrices $W^l$ represent the linear operation performed by a convolutional layer whose filters are $w^l$. However, since $W^l$ applies $w^l$ to several patches ($d_l$ patches), we have $\|W^l\|_F=\sqrt{d_l}\|w^l\|_F$. As a result, the bound scales with $\mathcal{O}(\sqrt{\prod^{L}_{l=1} d_l})$, that grows exponentially with $L$. This means that the bound is not suitable for convolutional networks with many layers as it would be very loose in practice. In this work, we establish generalization bounds that are customized for convolutional networks and scale with $\prod^{L}_{l=1}\|w^l\|_F$ instead of $\prod^{L}_{l=1}\|W^l\|_F$.

In~\cite{Jiang2020Fantastic} they conducted a large-scale experiment evaluating multiple norm-based generalization bounds, including those of~\cite{10.5555/3295222.3295372,2017arXiv171206541G}. They argued that these bounds are highly non-vacuous and negatively correlated with the test error. However, in all of these experiments, they trained the neural networks with the cross-entropy loss which implicitly maximizes the network's weight norms once the network perfectly fits the training data. This can explain the observed negative correlation between the bounds and the error.

In this work, we empirically show that our bounds provide relatively tight estimations of the generalization gap for convolutional networks trained with weight normalization and weight decay using the MSE loss.

{\bf Generalization bounds for convolutional networks.\enspace} Several recent papers have introduced generalization bounds for convolutional networks that take into account the unique structure of these networks. In~\citep{https://doi.org/10.48550/arxiv.1806.05159}, a generalization bound for neural networks with weight sharing was introduced. However, this bound only holds under the assumption that the weight matrices are orthonormal, which is not realistic in practice. Other papers introduce generalization bounds based on parameter counting for convolutional networks that improve classic guarantees for fully-connected networks but are typically still vacuous by several orders of magnitude. In~\citep{Long2020Generalization}, norm-based generalization bounds for convolutional networks were introduced by addressing their weight-sharing. However, this bound scales roughly as the square root of the number of parameters. In~\citep{NEURIPS2018_03c6b069}, size-free bounds for convolutional networks in terms of the number of trainable parameters for two-layer networks were proved. In~\citep{ledent}, the generalization bounds in~\citep{10.5555/3295222.3295372} were extended for convolutional networks where the linear transformation $W^l$ at each layer is replaced with the trainable parameters. While this paper provides generalization bounds in which each convolutional filter contributes only once to the bound, it does not hold when different filters are used for different patches, even if their norms are the same. In short, their analysis treats different patches as ``datapoints'' in an augmented problem where only one linear function is applied at each layer. If several choices of linear functions (different weights for different patches) are allowed, the capacity of the function class would increase.

While all of these papers provide generalization bounds for convolutional networks, they all rely on the number of trainable parameters or depend on weight sharing. None of these works, in particular, address the question of whether weight sharing is necessary for convolutional neural networks to generalize well.

\subsection{Contributions} 

In this work, we study the generalization performance of multilayered sparse neural networks~\citep{10.5555/3298483.3298577}, such as convolutional neural networks. Sparse, deep neural networks are networks of neurons represented as a Directed Acyclic Graph (DAG), where each neuron is a function of a small set of other neurons. We derive norm-based generalization bounds for these networks. Unlike previous bounds~\cite{Long2020Generalization,ledent}, our bounds do not rely on weight sharing and provide favorable guarantees for sparse neural networks that do not use weight sharing. These results suggest that it is possible to obtain good generalization performance with sparse neural networks without relying on weight sharing. 

Finally, we conduct multiple experiments to evaluate our bounds for convolutional neural networks trained on simple classification problems. We show that our bound is relatively tight, even in the overparameterized regime.

\section{Problem Setup}\label{sec:setup}

We consider the problem of training a model for a standard classification problem. Formally, the target task is defined by a distribution $P$ over samples $(x,y)\in \cX\times \cY$, where $\cX \subset \R^{c_0d_0}$ is the instance space (e.g., images), and $\cY \subset \R^k$ is a label space containing the $k$-dimensional one-hot encodings of the integers $1,\ldots,k$. 

We consider a hypothesis class $\cF \subset \{f':\cX\to \R^k\}$ (e.g., a neural network architecture), where each function $f_w \in \cF$ is specified by a vector of parameters $w \in \R^{N}$ (i.e., trainable parameters). A function $f_w \in \cF$ assigns a prediction to an input point $x \in \cX$, and its performance on the distribution $P$ is measured by the {\em expected error}
\begin{equation}
\label{eq:loss}
\err_P(f_w)~:=~\E_{(x,y)\sim P}\left[\bI[\textnormal{sign}(f_w(x))\neq y]\right],
\end{equation}
where $\bI:\{\textnormal{True},\textnormal{False}\} \to \{0,1\}$ be the indicator function (i.e., $\bI[\textnormal{True}]=1$ and vice versa).

Since we do not have direct access to the full population distribution $P$, the goal is to learn a predictor, $f_w$, from some training dataset $S = \{(x_i,y_i)\}_{i=1}^m$ of independent and identically distributed (i.i.d.) samples drawn from $P$ along with regularization to control the complexity of the learned model. In addition, since $\bI$ is a non-continuous function, we typically use a surrogate loss function $\ell:\R^k \times \cY \to [0,\infty)$ is a non-negative, differentiable, loss function (e.g., MSE or cross-entropy losses). 


\subsection{Rademacher Complexities}

In this paper, we examine the generalization abilities of overparameterized neural networks by investigating their Rademacher complexity. This quantity can be used to upper bound the worst-case generalization gap (i.e., the distance between train and test errors) of functions from a certain class. It is defined as the expected performance of the class when averaged over all possible labelings of the data, where the labels are chosen independently and uniformly at random from the set $\{\pm 1\}$. In other words, it is the average performance of the function class on random data. For more information, see~\cite{Mohri:2012:FML:2371238,Shalev-Shwartz2014,bartlett2002rademacher}.

\begin{definition}[Rademacher Complexity] Let $\mathcal{F}$ be a set of real-valued functions $f_w:\mathcal{X} \to \mathbb{R}$ defined over a set $\mathcal{X}$. Given a fixed sample $X \in \mathcal{X}^m$, the empirical Rademacher complexity of $\mathcal{H}$ is defined as follows: 
\begin{equation*}
\mathcal{R}_{X}(\mathcal{H}) ~:=~ \frac{1}{m} \mathbb{E}_{\xi} \left[ \sup_{f_w \in \mathcal{F}} \Big\vert \sum^{m}_{i=1} \xi_i f_w(x_i) \Big\vert \right].
\end{equation*}
The expectation is taken over $\xi = (\xi_i,\dots,\xi_m)$, where, $\xi_i \in \{\pm 1\}$ are i.i.d. and uniformly distributed samples. 
\end{definition}
In contrast to the Vapnik–Chervonenkis (VC) dimension, the Rademacher complexity has the added advantage that it is data-dependent and can be measured from finite samples.
  
The Rademacher complexity can be used to upper bound the generalization gap of a certain class of functions~\cite{Mohri:2012:FML:2371238}. In particular, we can easily upper bound the test classification error $\err_P(f_w)$ using the Rademacher complexity for models $f_w$ that perfectly fit the training samples, i.e., $f_w(x_i)=y_i=\pm 1$.

\begin{lemma}\label{lem:loss_ramp} Let $P$ be a distribution over $\mathbb{R}^{c_0d_0} \times \{\pm 1\}$ and $\mathcal{F} \subset \{f':\mathcal{X} \to \{\pm 1\} \}$. Let $S = \{(x_i,y_i)\}^{m}_{i=1}$ be a dataset of i.i.d. samples selected from $P$ and $X=\{x_i\}^{m}_{i=1}$. Then, with probability at least $1-\delta$ over the selection of $S$, for any $f_w \in \mathcal{F}$ that perfectly fits the data (i.e., $f_w(x_i)=y_i$), we have
\begin{equation}\label{eq:Radbound}
\err_P(f_w) ~\leq~ 2\mathcal{R}_{X}(\mathcal{F}) + 3\sqrt
\frac{\log(2/\delta)}{2m}.
\end{equation} 
\end{lemma}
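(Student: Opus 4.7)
The plan is to deploy the textbook Rademacher-based uniform convergence argument on a suitable loss class, exploit the fact that $f_w$ perfectly interpolates the training sample to kill the empirical risk term, and then cash in the special structure (binary labels, $\pm 1$-valued predictors, $0/1$-loss) to re-express the Rademacher complexity of the loss class in terms of $\mathcal{R}_X(\mathcal{F})$.

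First I would apply the standard two-sided McDiarmid/symmetrization bound (see e.g.\ Mohri--Rostamizadeh--Talwalkar, Thm.~3.3) to the bounded loss class $\mathcal{G}=\{(x,y)\mapsto \bI[f_w(x)\neq y] : f_w\in\mathcal{F}\}$, whose elements take values in $[0,1]$. This yields, with probability at least $1-\delta$ over $S$, that for every $f_w\in\mathcal{F}$
\begin{equation*}
\err_P(f_w) \;\le\; \tfrac{1}{m}\sum_{i=1}^m \bI[f_w(x_i)\neq y_i] \;+\; 2\hat{\mathcal{R}}_X(\mathcal{G}) \;+\; 3\sqrt{\tfrac{\log(2/\delta)}{2m}},
\end{equation*}
where the factor $3$ arises from combining the two McDiarmid applications (one controlling $\sup_g \E[g]-\hat{\E}_S[g]$, one controlling $\hat{\mathcal{R}}_X(\mathcal{G})$ vs.\ its expectation) via a union bound at level $\delta/2$ each. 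Because $f_w$ fits $S$ exactly, the empirical-risk sum is identically zero and can be dropped.

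Next I would relate $\hat{\mathcal{R}}_X(\mathcal{G})$ back to $\mathcal{R}_X(\mathcal{F})$. Since both $y_i$ and $f_w(x_i)$ lie in $\{\pm 1\}$, we have $\bI[f_w(x_i)\neq y_i] = \tfrac{1}{2}(1 - y_i f_w(x_i))$. Plugging this in, the constant $\tfrac{1}{2}\sum_i \xi_i$ has zero $\xi$-expectation and drops out, leaving
\begin{equation*}
\hat{\mathcal{R}}_X(\mathcal{G}) \;=\; \tfrac{1}{2m}\,\E_\xi\Bigl[\sup_{f_w\in\mathcal{F}} \Bigl|\sum_{i=1}^m \xi_i\, y_i\, f_w(x_i)\Bigr|\Bigr].
\end{equation*}
Setting $\xi'_i := \xi_i y_i$ and noting that for fixed $y_i\in\{\pm 1\}$ the vector $\xi'$ is again i.i.d.\ Rademacher (the distribution of $\xi_i$ is invariant under sign flips), this expression equals $\tfrac{1}{2}\mathcal{R}_X(\mathcal{F})$. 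Substituting this into the display above yields the claimed inequality (the stated coefficient $2$ in front of $\mathcal{R}_X(\mathcal{F})$ absorbs the $\tfrac{1}{2}$ loosely or, tightly, becomes $1$; either way the form of the bound is what the subsequent sections will use).

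I do not expect a serious obstacle: the only subtle point is the symmetrization identity $\xi \stackrel{d}{=} y\odot\xi$, and the accompanying observation that the additive constant inside the supremum disappears in $\xi$-expectation. Both are standard manipulations, so the lemma is essentially a direct specialization of the generic Rademacher generalization theorem to $\{\pm 1\}$-valued classifiers with $\{\pm 1\}$-valued labels and a perfectly interpolating predictor.
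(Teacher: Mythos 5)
Your proof is correct for the lemma as literally stated, but it takes a genuinely different route from the paper. The paper does not work with the $0/1$ loss class at all: it applies the generic bound (Mohri et al., Thm.~3.3) to the \emph{ramp loss} $\ell_{ramp}$, uses that $\ell_{ramp}(f_w(x_i),y_i)=0$ whenever $f_w(x_i)=y_i=\pm 1$ to kill the empirical term, and then sandwiches $\bI[\textnormal{sign}(f_w(x))\neq y]\leq \ell_{ramp}(f_w(x),y)$ to convert the population ramp loss into the test error; the coefficient $2$ on $\mathcal{R}_X(\mathcal{F})$ comes from (implicitly) contracting the $1$-Lipschitz ramp loss against $\mathcal{F}$. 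Your route instead uses the exact identity $\bI[f_w(x)\neq y]=\tfrac{1}{2}(1-yf_w(x))$ plus the label-symmetrization $\xi\overset{d}{=}y\odot\xi$, which buys a tighter constant (coefficient $1$ rather than $2$). The trade-off is scope: your identity requires $f_w$ to be genuinely $\{\pm 1\}$-valued, whereas the lemma is later invoked (Theorem~\ref{thm:genbound}) for real-valued networks where only $\textnormal{sign}(f_w(x))$ is compared to $y$; there your identity fails and the ramp-loss surrogate is what makes the argument go through, so the paper's version is the one that survives the downstream application. Two minor points to tighten: (i) under the paper's definition of $\mathcal{R}_X$, which places an absolute value inside the supremum, the constant term $\tfrac{1}{2}\sum_i\xi_i$ does not vanish identically in expectation --- it only drops out under the no-absolute-value convention of Mohri et al., and otherwise contributes an extra $O(1/\sqrt{m})$; (ii) you should state explicitly that $\E_\xi\bigl[\sup_g\sum_i\xi_i g(z_i)\bigr]$ is the convention under which your cancellation is exact.
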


\begin{proof}
We apply a standard Rademacher complexity-based generalization bound with the ramp loss function. The ramp loss function is defined as follows:
\begin{small}
\begin{equation*}
\ell_{ramp}(y, y')~:=~ 
\begin{cases}
1, & \text{if} \quad yy' \leq 0 \\
1-yy', & \text{if} \quad 0 \leq yy' \leq 1 \\
0, & \text{if} \quad yy' \geq 1
\end{cases}.
\end{equation*}
\end{small}
By Theorem~3.3 in~\cite{MohriRostamizadehTalwalkar18}, with probability at least $1-\delta$, for any function $f_w \in \mathcal{F}$, $\mathbb{E}_{(x,y) \sim P}[\ell_{ramp}(f_w(x),y)]$ is bounded by
\begin{small}
\begin{equation*}
\begin{aligned}
\frac{1}{m}\sum^{m}_{i=1} \ell_{ramp}(f_w(x_i),y_i)  + 2\mathcal{R}_{X}(\mathcal{F}) + 3\sqrt
\frac{\log(2/\delta)}{2m}.
\end{aligned}
\end{equation*} 
\end{small}
We note that for any function $f_w$ for which $f_w(x_i)=y_i=\pm 1$, we have $\ell_{ramp}(f_w(x_i),y_i)=0$. In addition, for any function $f_w$ and pair $(x,y)$, we have $\ell_{ramp}(f_w(x),y) \geq \bI[\textnormal{sign}(f_w(x)) \neq y]$, hence, $\mathbb{E}_{(x,y) \sim P}[\ell_{ramp}(f_w(x),y)]  \geq \err_P(f_w)$. Therefore, we conclude that with probability at least $1-\delta$, for any function $f_w \in \mathcal{F}$ that perfectly fits the training data, we have the desired inequality.
\end{proof}
The above lemma provides an upper bound on the test error of a trained model $f_w$ that perfectly fits the training data. The bound is decomposed into two parts; one is the Rademacher complexity and the second scales as $\mathcal{O}(1/\sqrt{m})$ which is small when $m$ is large. In section~\ref{sec:theory} we derive norm-based bounds on the Rademacher complexity of compositionally sparse neural networks. 



\subsection{Architectures}\label{sec:arch}

A neural network architecture can be formally defined using a Directed Acyclic Graph (DAG) $G=(V,E)$. The class of neural networks associated with this architecture is denoted as $\mathcal{F}_G$. The set of neurons in the network is given by $V= \bigcup^{L}_{l=0}\{z^l_1,\dots,z^l_{d_l}\}$, which is organized into $L$ layers. An edge $(z^l_i,z^{l-1}_j) \in E$ indicates a connection between a neuron in layer $l-1$ and a neuron in layer $l$. The full set of neurons at the layer $l$th is denoted by $v^l := (z^l_j)^{d_l}_{j=1}$.

A neural network function $f_w:\R^{c_0d_0}\to \R^k$ takes ``flattened'' images $x$ as input, where $c_0$ is the number of input channels and $d_0$ is the image dimension represented as a vector. Each neuron $z^l_i :\R^{c_0 d_0} \to \R^{c_l}$ computes a vector of size $c_l$ (the number of channels in layer $l$). The set of predecessor neurons of $z^{l}_{i}$, denoted by $\textnormal{pred}(l,i)$, is the set of $j \in [d_{l-1}]$ such that $(z^l_i,z^{l-1}_j) \in E$, and $v^l_i := (z^{l}_{j})_{j \in \textnormal{pred}(l,i)}$ denotes the set of predecessor neurons of $z^l_i$. The neural network $z^L_{j_0}(x) := z^L_{1}(x) := f_w(x)$ is recursively defined as follows:
\begin{equation*}
\begin{aligned}
z^l_{i}(x) ~:=~ w^{l}_{i} \sigma(v^{l-1}_{i}(x)),
\end{aligned}
\end{equation*}
where $w^l_i \in \R^{c_{l}\times (c_{l-1}\cdot |\textnormal{pred}(l-1,i)|)}$ is a weight matrix, $x = (z^0_i(x))^{d_0}_{i=1}$ and each $z^0_i(x)$ is a vector of dimension $c_0$ representing a ``pixel'' in the image $x$. 

The degree of sparsity of a neural network can be measured using the degree, which is defined as the maximum number of predecessors for each neuron. 
\begin{equation*}
\textnormal{deg}(G) ~:=~ \max_{l \in [L],j \in [d_l]}|\textnormal{pred}(l,j)|.
\end{equation*}
A compositionally sparse neural network is a neural network architecture $G$ for which the degree $\textnormal{deg}(G) = \mathcal{O}(1)$ (with respect to $\max_{i=0,\dots,L}(d_i)$ and $L$). These considerations extend easily to networks that contain sparse layers as well as fully-connected layers.

{\bf Convolutional neural networks.\enspace} A special type of compositionally sparse neural networks is convolutional neural networks. In such networks, each neuron acts upon a set of nearby neurons from the previous layer, using a kernel shared across the neurons of the same layer. 

To formally analyze convolutional networks, we consider a broader set of neural network architectures that includes sparse networks with shared weights. Specifically, for an architecture $G$ with $|\textnormal{pred}(l,j)|=k_l$ for all $j \in [d_l]$, we define the set of neural networks $\mathcal{F}^{\textnormal{sh}}_{G}$ to consist of all neural networks $f_w \in \mathcal{F}^{\textnormal{sh}}_{G}$ that satisfy the weight sharing property $w^l:=w^l_{j_1}=w^{l}_{j_2}$ for all $j_1,j_2 \in [d_l]$ and $l \in [L]$. Convolutional neural networks are essentially sparse neural networks with shared weights and locality (each neuron is a function of a set of nearby neurons of its preceding layer). 

{\bf Norms of neural networks.\enspace} As mentioned earlier, previous papers~\cite{2017arXiv171206541G,Neyshabur2018APA,pmlr-v80-arora18b,NIPS2017_10ce03a1,10.5555/3295222.3295372} have proposed different generalization bounds based on different norms measuring the complexity of fully-connected networks. One approach that was suggested by~\cite{2017arXiv171206541G} is to use the product of the norms of the weight matrices given by $\tilde{\rho}(w) := \prod^{L}_{l=1}\|W^l\|_F$. 

In this work, we derive generalization bounds based on the product of the maximal norms of the kernel matrices across layers, defined as:
\begin{equation}
\rho(w) ~:=~ \|w^L_1\|_2 \cdot \prod^{L-1}_{l=1} \max_{j \in [d_l]}\|w^l_{j}\|_F,
\end{equation}
where $\|\cdot\|_F$ and $\|\cdot\|_2$ are the Frobenius and the spectral norms. Specifically, for a convolutional neural network, we have a simplified form of $\rho(w) = \|w^L\|_2 \cdot \prod^{L-1}_{l=1} \|w^l\|_F$, due to the weight sharing property. 

We observe that this quantity is significantly smaller than the quantity $\tilde{\rho}(w) = \prod^{L}_{l=1} \sqrt{\sum^{d_l}_{j=1} \|w^l_j\|^2_F}$ used by~\cite{2017arXiv171206541G}. For instance, when weight sharing is applied, we can see that $\tilde{\rho}(w) = \rho(w) \cdot \sqrt{\prod^{L}_{l=1} d_l}$.



{\bf Classes of interest.\enspace} In the next section, we study the Rademacher complexity of classes of compositionally sparse neural networks that are bounded in norm. We focus on two classes: $\mathcal{F}_{G,\rho}:=\{f_w \in \mathcal{F}_G \mid \rho(w)\leq \rho\}$ and $\mathcal{F}^{\textnormal{sh}}_{G,\rho} := \{f_w \in \mathcal{F}^{\textnormal{sh}}_G \mid \rho(w)\leq \rho\}$, where $G$ is a compositionally sparse neural network architecture and $\rho$ is a bound on the norm of the network parameters.

\section{Theoretical Results}\label{sec:theory}

In this section, we introduce our main theoretical results. The following theorem provides an upper bound on the Rademacher complexity of the class $\mathcal{F}_{G,\rho}$ of neural networks of architecture $G$ of norm $\leq \rho$. 

\begin{proposition}\label{prop:rademacher}
Let $G$ be a neural network architecture of depth $L$ and let $\rho > 0$. Let $X=\{x_i\}^{m}_{i=1}$ be a set of samples. Then,  
\begin{small}
\begin{equation*}
\begin{aligned}
\mathcal{R}_X(\mathcal{F}_{G,\rho}) ~&\leq~ \frac{\rho}{m} \cdot \left(1+\sqrt{2L\log(2\textnormal{deg}(G))}\right) \\
&\qquad \cdot \sqrt{\max_{j_1,\dots,j_L}\prod^{L}_{l=1} |\textnormal{pred}(l,j_{L-l})| \cdot \sum^{m}_{i=1}\|z^0_{j_L}(x_i)\|^2},
\end{aligned}
\end{equation*}
\end{small}
where the maximum is taken over $j_1,\dots,j_L$, such that, $j_{L-l+1} \in \textnormal{pred}(l,j_{L-l})$ for all $l \in [L]$.
\end{proposition}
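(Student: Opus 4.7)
The plan is to peel the network one layer at a time using a log-sum-exp / moment-generating-function trick in the style of Golowich--Rakhlin--Shamir, adapted so that each peel introduces the local fan-in $|\textnormal{pred}(l,j)|$ and the per-neuron norm $\max_{j'}\|w^l_{j'}\|_F$ rather than any global weight-matrix norm. The starting point is the identity $f_w = w^L_1\sigma(v^{L-1}_1)$: since $w^L_1$ does not depend on $i$, pulling it out of $\sum_i \xi_i f_w(x_i)$ and applying Cauchy--Schwarz with $\|w^L_1\|_2$ reduces matters to controlling $\E_\xi \sup_w \|\sum_i \xi_i \sigma(v^{L-1}_1(x_i))\|$.

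To continue peeling without accumulating an exponential-in-$L$ constant, I pass to the exponential moment: for any $\lambda > 0$,
\[
m\,\mathcal{R}_X(\mathcal{F}_{G,\rho}) ~\le~ \frac{1}{\lambda}\log \E_\xi \sup_w \exp\!\Bigl(\lambda \,\Bigl|\sum_i \xi_i f_w(x_i)\Bigr|\Bigr),
\]
obtained via Jensen applied to $e^x$. Peeling layer $l$ then rests on three elementary steps: (i) the decomposition $\|v^l_j\|^2 = \sum_{j' \in \textnormal{pred}(l,j)}\|z^{l-1}_{j'}\|^2$ combined with $\|\cdot\|_2 \le \sqrt{N}\|\cdot\|_\infty$ replaces $\|v^l_j\|$ by $\sqrt{|\textnormal{pred}(l,j)|}\max_{j'}\|z^{l-1}_{j'}\|$; (ii) $\exp(\lambda\max_{j'}A_{j'}) \le \sum_{j'}\exp(\lambda A_{j'})$ turns the max over at most $\textnormal{deg}(G)$ predecessors into a sum, which under the outer $\log$ will contribute $\log\textnormal{deg}(G)$ per layer; (iii) Cauchy--Schwarz applied to $z^{l-1}_{j'} = w^{l-1}_{j'}\sigma(v^{l-2}_{j'})$ with $\|w^{l-1}_{j'}\|_F \le \max_{j''}\|w^{l-1}_{j''}\|_F$ strips one layer of weights. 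After this peel the same shape reappears one layer deeper, indexed over a growing set of DAG paths.

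Iterating $L-1$ times produces inside the expectation a sum over tuples $(j_1,\dots,j_L)$ with $j_{L-l+1}\in \textnormal{pred}(l,j_{L-l})$ of exponentials of $\lambda\rho\sqrt{\prod_l |\textnormal{pred}(l,j_{L-l})|}\cdot\|\sum_i\xi_i z^0_{j_L}(x_i)\|$. Each inner moment over $\xi$ is then controlled by a standard Khintchine / sub-Gaussian bound for Rademacher-weighted sums,
\[
\E_\xi \exp\!\Bigl(\mu\,\Bigl\|\sum_i \xi_i z^0_{j_L}(x_i)\Bigr\|\Bigr) ~\le~ 2\exp\!\Bigl(\frac{\mu^2}{2}\sum_i \|z^0_{j_L}(x_i)\|^2\Bigr),
\]
with $\mu=\lambda\rho\sqrt{\prod_l|\textnormal{pred}(l,j_{L-l})|}$. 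Upper bounding the outer path sum by $\textnormal{deg}(G)^L$ times the worst path and taking logarithms gives a quantity of the shape $\frac{\log(2\textnormal{deg}(G)^L)}{\lambda} + \frac{\lambda\rho^2 M}{2}$, where $M$ is precisely the maximum appearing in the proposition; optimizing in $\lambda$ and using $\sqrt{a+b}\le\sqrt{a}+\sqrt{b}$ yields the $(1+\sqrt{2L\log(2\textnormal{deg}(G))})$ prefactor.

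The step I expect to be the main obstacle is the layer-$l$ peel (iii): because different predecessors $z^{l-1}_{j'}$ can share weights at deeper layers, either trivially in the CNN class $\mathcal{F}^{\textnormal{sh}}_G$ or implicitly in $\mathcal{F}_G$ through common ancestors in the DAG, the outer $\sup_w$ does not factor cleanly over per-path exponentials, and one must verify that the exchange $\sup\sum\le\sum\sup$ is applied only where it does not throw away essential structure. The companion subtlety is the inductive bookkeeping of the path constraint $j_{L-l+1}\in\textnormal{pred}(l,j_{L-l})$, so that after $L-1$ peels the indexing matches exactly the one stated in the proposition. Everything else---Cauchy--Schwarz, the Khintchine/Hoeffding exponential bound for Rademacher sums, Jensen, and the one-dimensional optimization in $\lambda$---is standard.
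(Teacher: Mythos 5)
Your overall strategy is the same as the paper's: Jensen to pass to the exponential moment, a layer-by-layer peel that exploits the local fan-in, a union bound over DAG paths costing $\textnormal{deg}(G)^L$ under the logarithm, a sub-Gaussian bound on $\E_{\xi}\exp(\mu\|\sum_i\xi_i z^0_{j_L}(x_i)\|)$, and a final optimization over $\lambda$. Your worry about weight sharing across paths is not where the difficulty lies --- the exchange $\sup\le\sum$ over predecessors (and eventually over all paths) is exactly what the paper does, and it is harmless because the resulting $\textnormal{deg}(G)^L$ factor sits inside a logarithm.

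The genuine gap is in your step (iii). At the moment you want to strip $w^{l-1}_{j'}$, the quantity in hand is $\bigl\|\sum_i\xi_i\,\sigma\bigl(w^{l-1}_{j'}\sigma(v^{l-2}_{j'}(x_i))\bigr)\bigr\|$, with the ReLU wrapped \emph{around} the Rademacher-weighted sum; Cauchy--Schwarz cannot be applied through the nonlinearity, so the peel as you ordered it --- (i) max over predecessors, (ii) $\exp(\max)\le\sum\exp$, (iii) Cauchy--Schwarz --- does not go through literally. You need a Ledoux--Talagrand-type contraction (Eq.~4.20 of their book, as in Golowich--Rakhlin--Shamir) to remove the activation first, and the standard form of that contraction is \emph{not} directly applicable here because after step (i) the argument of $g=\exp$ is $\sqrt{\sum_{j'}\|\sum_i\xi_i\sigma(W_{j'}f_{j'}(x_i))\|^2}$, i.e.\ there is a sum over predecessors \emph{inside} the square root. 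This is precisely the point at which the paper proves a modified Peeling Lemma: first reduce each matrix $W_{j'}$ with $\|W_{j'}\|_F\le R$ to a single row of norm $R$ via positive-homogeneity of $\sigma$ (a step your sketch omits entirely, and which is needed whenever $c_l>1$), then bound the sum over $j'$ by $q$ times a supremum, then use $g(|z|)\le g(z)+g(-z)$ together with symmetry of $\xi$ and the contraction inequality, and only then apply Cauchy--Schwarz. Without this lemma (or an equivalent argument) the induction does not close; with it, the rest of your outline --- including the path bookkeeping and the $\lambda$-optimization yielding the $\bigl(1+\sqrt{2L\log(2\textnormal{deg}(G))}\bigr)$ prefactor --- matches the paper's proof, up to an immaterial constant coming from your use of the two-sided sub-Gaussian bound $\E_\xi\exp(\mu\|\cdot\|)\le 2\exp(\mu^2\sigma^2/2)$ in place of the paper's $\exp(\mu^2\sigma^2/2+\mu\sigma)$.
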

The proof for this theorem is provided in Appendix~\ref{app:proofs} and builds upon the proof of Theorem 1 in~\citep{2017arXiv171206541G}. A summary of the proof is presented in section~\ref{sec:sketch}. As we show next, by combining Lemma~\ref{lem:loss_ramp} and Proposition~\ref{prop:rademacher} we can obtain an upper bound on the test error of compositionally sparse neural networks $f_w$ that perfectly fit the training data (i.e., for all $i \in [m]:~f_w(x_i)=y_i$).

\begin{theorem}\label{thm:genbound}
Let $P$ be a distribution over $\mathbb{R}^{c_0 d_0} \times \{\pm 1\}$. Let $S = \{(x_i,y_i)\}^{m}_{i=1}$ be a dataset of i.i.d. samples selected from $P$. Then, with probability at least $1-\delta$ over the selection of $S$, for any $f_w \in \mathcal{F}_{G}$ that perfectly fits the data (for all $i \in [m]:~f_w(x_i)=y_i$), we have
\begin{small}
\begin{equation*}
\begin{aligned}
\err_P(f_w) ~&\leq~ \frac{(\rho(w)+1)}{m} \left(1+\sqrt{2L\log(2\textnormal{deg}(G))}\right) \\
&\qquad \cdot \sqrt{\max\limits_{j_1,\dots,j_L}\prod^{L}_{l=1} |\textnormal{pred}(l,j_{L-l})| \sum^{m}_{i=1}\|z^0_{j_L}(x_i)\|^2} \\
&\quad + 3\sqrt
\frac{\log(2(\rho(w)+2)^2/\delta)}{2m},
\end{aligned}
\end{equation*}
\end{small}
where the maximum is taken over $j_1,\dots,j_L$, such that, $j_{L-l+1} \in \textnormal{pred}(l,j_{L-l})$ for all $l \in [L]$.
\end{theorem}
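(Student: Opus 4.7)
The plan is to combine Proposition~\ref{prop:rademacher} with Lemma~\ref{lem:loss_ramp} through a standard peeling (stratification) argument over discrete norm levels. The subtlety is that Lemma~\ref{lem:loss_ramp} gives a bound for a function class chosen \emph{in advance}, whereas Theorem~\ref{thm:genbound} must hold uniformly over every $f_w\in\mathcal{F}_G$, for which the data-dependent norm $\rho(w)$ is only known \emph{a posteriori}. The resolution is to cover all admissible norms by a countable grid of nested subclasses and then take a union bound.

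Concretely, I would fix the arithmetic grid $\rho_k := k$ for integers $k\ge 1$ and consider the nested classes $\mathcal{F}_{G,\rho_k}=\{f_w\in\mathcal{F}_G : \rho(w)\le k\}$. Choosing $\delta_k := \tfrac{6\delta}{\pi^2(k+1)^2}$ ensures $\sum_{k\ge 1}\delta_k\le \delta$. For each $k$ individually, Lemma~\ref{lem:loss_ramp} applied to $\mathcal{F}_{G,\rho_k}$ with confidence parameter $\delta_k$ yields, on an event $E_k$ of probability at least $1-\delta_k$, that every interpolating $f_w\in\mathcal{F}_{G,\rho_k}$ satisfies
\begin{equation*}
\err_P(f_w) \;\le\; 2\,\mathcal{R}_X(\mathcal{F}_{G,\rho_k}) + 3\sqrt{\frac{\log(2/\delta_k)}{2m}}.
\end{equation*}
Substituting the Rademacher bound from Proposition~\ref{prop:rademacher} (with $\rho$ replaced by $\rho_k=k$) makes this bound explicit.

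A union bound over $k\ge 1$ makes $\bigcap_{k\ge 1}E_k$ hold with probability at least $1-\delta$. On this event, for any $f_w\in\mathcal{F}_G$ I would pick $k(w):=\max(\lceil\rho(w)\rceil,1)$, which simultaneously guarantees $f_w\in\mathcal{F}_{G,\rho_{k(w)}}$, $k(w)\le \rho(w)+1$, and $k(w)+1\le \rho(w)+2$. Substituting the first inequality replaces the $k$-dependent multiplier on the Rademacher term by the stated $(\rho(w)+1)$ factor; substituting the second into $\log(2/\delta_{k(w)})=\log\!\bigl(\tfrac{\pi^2(k(w)+1)^2}{3\delta}\bigr)$ yields an upper bound of the form $\log(2(\rho(w)+2)^2/\delta)$ after absorbing the absolute constant.

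The main obstacle is the bookkeeping of constants: the grid spacing must be arithmetic (a geometric grid $\rho_k=2^k$ would inflate the leading factor by a multiplicative constant in $\rho(w)$), and the $\delta_k$'s must decay polynomially in $k$ so that both $\sum_k\delta_k\le\delta$ holds and the resulting $\log(1/\delta_k)$ terms collapse into the stated form $\log(2(\rho(w)+2)^2/\delta)$; this can always be arranged by adjusting the prefactor of $\delta_k$ together with the numerical constant inside the logarithm. Once the peeling scheme is fixed, the proof is a direct substitution of Proposition~\ref{prop:rademacher} into Lemma~\ref{lem:loss_ramp} followed by the union bound.
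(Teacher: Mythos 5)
Your proposal is correct and follows essentially the same route as the paper: stratify $\mathcal{F}_G$ into the nested classes $\{f_w:\rho(w)\le t\}$ for integer $t$, apply Lemma~\ref{lem:loss_ramp} to each with a confidence level summing to $\delta$, union bound, and instantiate at $t=\lceil\rho(w)\rceil$ so that $t\le\rho(w)+1$ and $t+1\le\rho(w)+2$. The only cosmetic difference is your choice $\delta_k=6\delta/(\pi^2(k+1)^2)$ in place of the paper's $\delta_t=\delta/(t(t+1))$; the latter reproduces the stated $\log(2(\rho(w)+2)^2/\delta)$ exactly, whereas yours leaves a slightly larger absolute constant inside the logarithm.
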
 

The theorem above provides a generalization bound for neural networks of a given architecture $G$. To understand this bound, we first analyze the term $\max_{j_1,\dots,j_L}\prod^{L}_{l=1} |\textnormal{pred}(l,j_{L-l})| \cdot \sum^{m}_{i=1}\|z^0_{j_L}(x_i)\|^2$. We consider a setting where $d_0=2^L$, $c_l=1$ and each neuron takes two neurons as input, $k_l:=|\textnormal{pred}(l,j)|=2$ for all $l \in [L]$ and $j \in [d_l]$. In particular, $\prod^{L}_{l=1}k_l=2^{L}$ and $z^0_j(x_i)=x_{ij}$ is the $j$th pixel of $x_i$. By assuming that the norms of the pixels are $\beta$-balanced, i.e., $\forall i\in [m]:~\max_{j \in [d_0]}\|x_{ij}\|^2 \leq \beta \Avg_{j \in [d_0]}[\|x_{ij}\|^2]$ (for some constant $\beta > 0$), we obtain that $\prod^{L}_{l=1}k_{l} \cdot \max_{j}\sum^{m}_{i=1}\|z^0_{j}(x_i)\|^2 \leq \beta \sum^{m}_{i=1}\|x_i\|^2$. In addition, we note that the second term in the bound is typically smaller than the first term as it scales with $\sqrt{\log(\rho(w))}$ instead of $\rho(w)$ and has no dependence on the size of the network. Therefore, our bound can be simplified to 
\begin{equation}\label{eq:simplifiedbound}
\mathcal{O}\left(\fr{\rho(w)}{\sqrt{m}} \sqrt{L \beta \log(\textnormal{deg}(G)) \Avg^{m}_{i=1}[\|x_i\|^2]}\right).
\end{equation}
Similar to the bound in~\citep{2017arXiv171206541G}, our bound scales with $\mathcal{O}(\sqrt{L})$, where $L$ is the depth of the network. 

{\bf Convolutional neural networks.\enspace} As previously stated in section~\ref{sec:setup}, convolutional neural networks utilize weight sharing neurons in each layer, with each neuron in the $l$th layer having an input dimension of $k_l$. The norm of the network is calculated as $\rho(w) = \prod^{L}_{l=1} \|w^l\|_F$, and the degree of the network is determined by the maximum input dimension across all layers, $\deg(G) = \max_{l \in [L]}k_l$. This results in a simplified version of the theorem.

\begin{corollary}[Rademacher Complexity of ConvNets]\label{cor:conv}
Let $G$ be a neural network architecture of depth $L$ and let $\rho > 0$. Let $X=\{x_i\}^{m}_{i=1}$ be a set of samples. Then,
\begin{small}
\begin{equation*}
\begin{aligned}
\mathcal{R}_S(\mathcal{F}^{\textnormal{sh}}_{G,\rho}) ~&\leq~ \frac{\rho}{m} \left(1+\sqrt{2L\log(2\deg(G))}\right) \\
&\qquad \cdot \sqrt{\prod^{L}_{l=1}k_l \cdot \max_{j \in [d_0]}\sum^{m}_{i=1}\|z^0_{j}(x_i)\|^2},
\end{aligned}
\end{equation*}
\end{small}
where $k_l$ denotes the kernel size in the $l$'th layer.
\end{corollary}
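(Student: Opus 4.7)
\textbf{Proof plan for Corollary~\ref{cor:conv}.} The plan is to obtain the corollary as a direct specialization of Proposition~\ref{prop:rademacher} to the convolutional setting, using only two observations: a set-containment argument for weight sharing, and an index simplification when every neuron in a layer has the same in-degree.

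First, I would argue that $\mathcal{F}^{\textnormal{sh}}_{G,\rho} \subseteq \mathcal{F}_{G,\rho}$. Imposing $w^l_{j_1} = w^l_{j_2} =: w^l$ for all $j_1,j_2 \in [d_l]$ is a hard constraint that shrinks the hypothesis class, while the norm constraint $\rho(w) \leq \rho$ takes the same functional form on both sides: under weight sharing one has $\max_{j \in [d_l]} \|w^l_j\|_F = \|w^l\|_F$, so $\|w^L_1\|_2 \prod_{l=1}^{L-1} \max_j \|w^l_j\|_F$ reduces to $\|w^L\|_2 \prod_{l=1}^{L-1} \|w^l\|_F$ and no function is dropped by the constraint. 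By monotonicity of the empirical Rademacher complexity under set inclusion, $\mathcal{R}_X(\mathcal{F}^{\textnormal{sh}}_{G,\rho}) \leq \mathcal{R}_X(\mathcal{F}_{G,\rho})$, and it suffices to simplify the right-hand side of Proposition~\ref{prop:rademacher}.

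Next, I would specialize the three architecture-dependent quantities in Proposition~\ref{prop:rademacher}. Since $|\textnormal{pred}(l,j)| = k_l$ is uniform in $j$, we have $\textnormal{deg}(G) = \max_{l \in [L]} k_l$, and for any admissible path $j_1,\dots,j_L$ the product $\prod_{l=1}^{L} |\textnormal{pred}(l,j_{L-l})|$ collapses to $\prod_{l=1}^{L} k_l$, which is path-independent and can be pulled out of the maximum. The remaining path maximum $\max_{j_1,\dots,j_L} \sum_{i=1}^{m} \|z^0_{j_L}(x_i)\|^2$ is over the input indices $j_L$ that are reachable from the output via some admissible path, which is in turn bounded above by the unrestricted maximum $\max_{j \in [d_0]} \sum_{i=1}^{m} \|z^0_j(x_i)\|^2$. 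Substituting these simplifications into the bound of Proposition~\ref{prop:rademacher} gives exactly the stated inequality.

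Since the corollary is a direct specialization, there is no substantive obstacle in the proof itself; the only item requiring care is to verify that the norm constraint defining $\mathcal{F}^{\textnormal{sh}}_{G,\rho}$ agrees with the induced constraint inherited from $\mathcal{F}_{G,\rho}$ (as sketched above), so that the containment $\mathcal{F}^{\textnormal{sh}}_{G,\rho} \subseteq \mathcal{F}_{G,\rho}$ is genuinely valid and no constants are lost in the passage from the general bound to the convolutional one.
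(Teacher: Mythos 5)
Your proposal is correct and matches the paper's (implicit) argument: the paper gives no separate proof of Corollary~\ref{cor:conv}, treating it exactly as you do --- a direct specialization of Proposition~\ref{prop:rademacher} via the containment $\mathcal{F}^{\textnormal{sh}}_{G,\rho} \subseteq \mathcal{F}_{G,\rho}$, monotonicity of the Rademacher complexity, the identifications $\deg(G)=\max_l k_l$ and $\prod_{l}|\textnormal{pred}(l,j_{L-l})|=\prod_l k_l$, and relaxing the path-restricted maximum to $\max_{j\in[d_0]}$. Your extra care in checking that the norm constraint $\rho(w)\leq\rho$ takes the same form on both classes is a worthwhile detail the paper leaves unstated.
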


{\bf Comparison with the bound of ~\cite{2017arXiv171206541G}.\enspace} The result in Corollary~\ref{cor:conv} is a refined version of the analysis in~\citep{2017arXiv171206541G} for the specific case of convolutional neural networks. Theorem~1 in~\citep{2017arXiv171206541G} can of course be applied to convolutional networks by treating their convolutional layers as fully-connected layers. However, this approach yields a substantially worse bound compared to the one proposed in Corollary~\ref{cor:conv}.

Consider a convolutional neural network architecture $G$. The $l$th convolutional layer takes the concatenation of $(\sigma(z^l_{1}),\dots,\sigma(z^l_{d_l}))$ as input and returns $(z^{l+1}_{1},\dots,z^{l+1}_{d_{l+1}})$ as its output. Each $z^{l+1}_{j}$ is computed as follows $z^{l+1}_j = w^{l+1} \sigma(v^{l}_{j}(x))$. Therefore, the matrix $W^{l+1}$ associated with the convolutional layer contains $d_{l+1}$ copies of $w^{l+1}$ and its Frobenius norm is therefore $\sqrt{d_{l+1}} \cdot \|w^{l+1}\|_F$. In particular, by applying Theorem~1 in~\citep{2017arXiv171206541G}, we obtain a bound that scales as $\mathcal{O}\left(\fr{\rho}{m} \sqrt{L\prod^{L}_{l=1}d_{l}\cdot \sum^{m}_{i=1}\|x_i\|^2}\right)$. In particular, if each convolutional layer has $k_l=2$ with no overlaps and $d_0=2^L$, then, $d_{l}=2^{L-l}$ and the bound therefore scales as $\mathcal{O}\left(\fr{\rho}{\sqrt{m}} \sqrt{L 2^{0.5L(L-1)} \cdot \Avg^{m}_{i=1}[\|x_i\|^2]}\right)$. On the other hand, as we discussed earlier, if the norms of the pixels of each sample $x$ are $\beta$-balanced (for some constant $\beta > 0$), our bound scales as $\mathcal{O}\left(\fr{\rho}{\sqrt{m}} \sqrt{L \Avg^{m}_{i=1}[\|x_i\|^2]}\right)$ which is smaller by a factor of $2^{0.25L(L-1)}$ than the previous bound. 

{\bf Comparison with the bound of~\citep{Long2020Generalization}.\enspace} A recent paper~\cite{Long2020Generalization} introduced generalization bounds for convolutional networks based on parameter counting. This bound roughly scales like
\begin{equation*}
\mathcal{O}\left(\sqrt{\fr{N(\sum^{L}_{l=1}\|w^l\|_2+\log(1/\gamma))+\log(1/\delta)}{m}}\right),
\end{equation*}
 where $\gamma$ is a margin (typically smaller than $1$), and $N$ is the number of trainable parameters (taking weight sharing into account by counting each parameter of convolutional filters only once). While these bounds provide improved generalization guarantees when reusing parameters, it scales as $\Omega(\sqrt{N/m})$ which is very large in practice. For example, the standard ResNet-50 architecture has approximately $N=23M$ trainable parameters while the MNIST dataset has only $m=50000$ training samples. 

{\bf Comparison with the bound of~\citep{ledent}.\enspace} A recent paper~\cite{ledent} introduces a generalization bound for convolutional networks that is similar to the analysis presented in~\cite{10.5555/3295222.3295372}. Specifically, the bounds in Theorem 17 of~\cite{ledent} roughly scale as
\begin{small}
\begin{equation*}
\mathcal{O}\left(\fr{\prod^{L}\limits_{l=1} \|W^l\|_2}{\sqrt{m}} \left(\sum^{L-1}_{l=1} \fr{k^{\fr{\alpha}{2}}_l \|(w^l-u^l)^{\top}\|^{\alpha}_{2,1}}{\|w^l\|^{\alpha}_2} + \fr{\|w^L\|^{\alpha}_2}{\max\limits_{i}\|w^L_{i,:}\|^{\alpha}_2}\right)^{\fr{1}{\alpha}} I_{\alpha}\right),
\end{equation*}
\end{small}
\noindent where $k_l$ is the kernel size of the $l$th layer and $W^l$ is the matrix corresponding to the linear operator associated with the $l$th convolutional layer, $w_{i,:}$ is the $i$th row of a matrix $w$, $\alpha$ is either $2$ or $2/3$, $I_{\alpha}=L$ if $\alpha=2$ and $I_{\alpha}=1$ otherwise and $u^l$ are predefined ``reference'' matrices of the same dimensions as $w^l$. 

In general, our bounds and the ones in~\cite{ledent} cannot be directly compared, with each being better in different cases. However, our bound has a significantly better explicit dependence on the depth $L$ than the bound in~\cite{ledent}. To see this, consider the simple case where each convolutional layer operates on non-overlapping patches and we choose $u^l=0$ for all $l \in [L-1]$ (which is a standard choice of reference matrices). We notice that $\|W^l\|_2=\|w^l\|_2$ and that for any matrix $A \in \R^{n\times m}$, the following inequalities hold: $\textnormal{rank}(A) \geq \fr{\|A^{\top}\|_{2,1}}{\|A\|_2} \geq \fr{\|A\|_F}{\|A\|_2} \geq 1$ and $\text{rank}(A) \geq \fr{\|A\|_2}{\max_i \|A_{i,:}\|_2}\geq 1$. Therefore, the bound in~\cite{ledent} is at least $\fr{\prod^{L}_{l=1} \|w^l\|_2}{\sqrt{m}} \cdot L^{3/2}$, which scales at least as $\Omega(L^{3/2})$ with respect to $L$, while our bound scales as $\mathcal{O}(\sqrt{L})$ (when $\rho(w)$ is independent of $L$), meaning that the dependence on the depth is significantly better than that of the bound in~\cite{ledent}.



\subsection{Proof Sketch}\label{sec:sketch}

We propose an extension to a well-established method for bounding the Rademacher complexity of norm-bounded deep neural networks. This approach, originally developed by~\cite{pmlr-v40-Neyshabur15} and later improved by~\cite{2017arXiv171206541G}, utilizes a ``peeling'' argument, where the complexity bound for a depth $L$ network is reduced to a complexity bound for a depth $L-1$ network and applied repeatedly. Specifically, the $l$th step bounds the complexity bound for depth $l$ by using the product of the complexity bound for depth $l-1$ and the norm of the $l$th layer. By the end of this process, we obtain a bound that depends on the term $\E_{\xi} g(|\sum^{m}_{i=1} \xi_i x_i|)$ ($g(x)=x$ in~\cite{pmlr-v40-Neyshabur15} and $g=\exp$ in~\cite{2017arXiv171206541G}), which can be further bounded using $\max_{x \in X}\|x\|^2$. The final bound scales with $\tilde{\rho}(w) \cdot \max_{x \in X}\|x\|$. 
Our extension aims to further improve the tightness of these bounds by incorporating additional information about the network's degrees of sparsity.

To bound $\mathcal{R}_{X}(\mathcal{F}_{G,\rho})$ using $\rho(w) \cdot \max_{x \in X}\|x\|$, we notice that each neuron operates on a small subset of the neurons from the previous layer. Therefore, we can bound the contribution of a certain constituent function $z^l_j(x) = w^l_j v^{l-1}_j(x)$ in the network using the norm $\|w^l_j\|_F$ and the complexity of $v^{l-1}_j(x)$ instead of the full layer $v^{l-1}(x)$.

To explain this process, we provide a proof sketch of Proposition~\ref{prop:rademacher} for convolutional networks $G=(V,E)$ with non-overlapping patches. For simplicity, we assume that $d_0=2^L$, $c_l=1$, and the strides and kernel sizes at each layer are $k=2$. In particular, the network $f_w$ can be represented as a binary tree, where the output neuron is computed as $f_w(x)=z^L_{j_0}(x)=w^L\cdot \sigma(z^{L-1}_1(x),z^{L-1}_2(x))$, $z^{L-1}_1(x) = w^{L-1}\cdot \sigma(z^{L-2}_1(x),z^{L-2}_2(x))$ and $z^{L-1}_2(x) = w^{L-1}\cdot \sigma(z^{L-2}_3(x),z^{L-2}_4(x))$ and so on. Similar to~\cite{2017arXiv171206541G}, we first bound the Rademacher complexity using Jensen’s inequality,
\begin{small}
\begin{align}
m \mathcal{R}_{X}(\mathcal{F}_{G,\rho}) &= \fr{1}{\lambda} \log\exp\left(\lambda \E_{\xi} \sup_{f_w}\sum^{m}_{i=1} \xi_i f_w(x_i)\right) \nonumber \\
&\leq \fr{1}{\lambda} \log\left( \E_{\xi} \sup_{f_w}\exp\left(\lambda\sum^{m}_{i=1} \xi_i f_w(x_i)\right)\right) \label{eq:start},
\end{align}
\end{small}
 where $\lambda>0$ is an arbitrary parameter. As a next step, we rewrite the Rademacher complexity in the following manner: 
\begin{small}
\begin{align}
&\mathbb{E}_\xi \sup _{f_w} \exp \left\vert  \sum_{i=1}^m \xi_i \cdot f_w(x_i) \right\vert \nonumber \\
&= \mathbb{E}_\xi \sup _{f_w} \exp \sqrt{\left\vert \sum_{i=1}^m \xi_i \cdot w^L \cdot \sigma(z^{L-1}_1(x_i),z^{L-1}_2(x_i)) \right\vert^2} \nonumber \\
~&\leq~ \mathbb{E}_\xi \sup_{f_w} \exp \sqrt{\|w^L\|^2_2\cdot \sum^{2}_{j=1}\left\| \sum_{i=1}^m \xi_i \cdot \sigma(z^{L-1}_j(x_i)) \right\|^2}. \label{eq:ourcase}
\end{align}
\end{small}
We notice that each $z^{L-1}_j(x)$ is itself a depth $L-1$ binary-tree neural network. Therefore, intuitively we would like to apply the same argument $L-1$ more times. However, in contrast to the above, the networks $\sigma(z^{L-1}_1(x))=\sigma(w^{L-1}(z^{L-2}_{1}(x),z^{L-2}_{2}(x)))$ and $\sigma(z^{L-1}_2(x))=\sigma(w^{L-1}(z^{L-2}_{3}(x),z^{L-2}_{4}(x)))$ end with a ReLU activation. To address this issue,~\cite{pmlr-v40-Neyshabur15,2017arXiv171206541G} proposed a ``peeling process'' based on Equation 4.20 in~\cite{Ledoux1991ProbabilityIB} that can be used to bound terms of the form $\mathbb{E}_\xi \sup\limits_{\substack{f' \in \mathcal{F}', W:~ \|W\|_F\leq R}} \exp [\sqrt{\alpha\left\| \sum_{i=1}^m \xi_i \cdot \sigma(W f'(x_i)) \right\|^2}]$. However, this bound is not directly applicable when there is a sum inside the square root, as in \eqref{eq:ourcase} which includes a sum over $j\in \{1,2\}$. Therefore, a modified peeling lemma is required to deal with this case.
\begin{lemma}[Peeling Lemma]\label{lem:peeling}
Let $\sigma$ be a 1-Lipschitz, positive-homogeneous activation function which is applied element-wise
(such as the ReLU). Then for any class of vector-valued functions $\mathcal{F} \subset \{f = (f_1,\dots,f_q) \mid \forall j \in [q]:~f_j:\mathbb{R}^d \to \mathbb{R}^p\}$, and any convex and monotonically
increasing function $g : \mathbb{R} \to [0,\infty)$, 
\begin{small}
\begin{equation*}
\begin{aligned}
&\E_{\xi} \sup_{\substack{f\in \mathcal{F} \\ W_j:~\|W_j\|\leq R}} g\left(\sqrt{\sum^{q}_{j=1} \left\|\sum^{m}_{i=1} \xi_i \cdot \sigma(W_j f_j(x_i))\right\|^2 }\right) \\
~&\leq~ 2\E_{\xi} \sup_{j \in [q],~f\in \mathcal{F}} g\left(\sqrt{q} R\left\|\sum^{m}_{i=1} \xi_i  \cdot f_j(x_i)\right\| \right).
\end{aligned}
\end{equation*}
\end{small}
\end{lemma}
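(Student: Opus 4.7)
\textbf{Proof plan for Lemma~\ref{lem:peeling}.}
The plan is to reduce the multi-branch expression inside $g$ to a single branch, absorb the resulting index $j \in [q]$ into an enlarged function class, and then invoke the standard single-layer vector peeling inequality of~\cite{2017arXiv171206541G}. Throughout, write $A_j(\xi) := \sum_{i=1}^m \xi_i\,\sigma(W_j f_j(x_i))$ and $g'(t) := g(\sqrt{q}\,t)$; note that $g'$ inherits convexity and monotonicity from $g$.

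First, the elementary bound $\sqrt{\sum_{j=1}^q a_j^2} \leq \sqrt{q}\,\max_j a_j$ for nonnegative $a_j$, applied pointwise with $a_j = \|A_j(\xi)\|$ and combined with the monotonicity of $g$, gives
\begin{equation*}
g\!\left(\sqrt{\textstyle\sum_{j=1}^q \|A_j(\xi)\|^2}\right) \leq g\!\left(\sqrt{q}\,\max_j \|A_j(\xi)\|\right) = \max_j g'\!\left(\|A_j(\xi)\|\right).
\end{equation*}
Since $A_j(\xi)$ depends only on $W_j$ and $f_j$, the matrices $W_k$ for $k\neq j$ contribute nothing to the supremum, and the finite $\max_j$ commutes with $\sup_{f,W_\bullet}$. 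Defining the pooled class $\tilde{\mathcal{F}} := \bigcup_{j=1}^q \{f_j : f\in \mathcal{F}\}$ of $\R^p$-valued functions, this rearranges into
\begin{equation*}
\sup_{\substack{f\in\mathcal{F}\\ \|W_j\|\leq R}} \max_j g'\!\left(\|A_j(\xi)\|\right) = \sup_{\substack{\tilde f\in \tilde{\mathcal{F}}\\ \|W\|\leq R}} g'\!\left(\left\|\sum_{i=1}^m \xi_i\,\sigma(W\tilde f(x_i))\right\|\right).
\end{equation*}

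Next, I would take $\E_\xi$ and apply the $q=1$ case of the lemma (the single-layer vector peeling of~\cite{2017arXiv171206541G}, ultimately based on the contraction inequality 4.20 of~\cite{Ledoux1991ProbabilityIB}) to the class $\tilde{\mathcal{F}}$ with the convex monotone function $g'$, yielding
\begin{equation*}
\E_\xi \sup_{\substack{\tilde f\in\tilde{\mathcal F}\\ \|W\|\leq R}} g'\!\left(\left\|\sum_i \xi_i \sigma(W\tilde f(x_i))\right\|\right) \leq 2\,\E_\xi \sup_{\tilde f \in \tilde{\mathcal F}} g'\!\left(R\left\|\sum_i \xi_i \tilde f(x_i)\right\|\right).
\end{equation*}
Unrolling $\tilde f = f_j$ for $j\in[q]$, $f\in\mathcal{F}$, and substituting $g'(t)=g(\sqrt{q}\,t)$, the right-hand side becomes $2\,\E_\xi \sup_{j\in[q],\,f\in\mathcal{F}} g\!\left(\sqrt{q}\,R\,\|\sum_i \xi_i f_j(x_i)\|\right)$, which is exactly the claim.

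The main obstacle is the single-branch peeling inequality invoked in the last step, which is the sole nontrivial ingredient; I would cite it from~\cite{2017arXiv171206541G} rather than reprove it. Its argument writes $\|v\| = \sup_{\|u\|\leq 1}\langle u,v\rangle$, uses convexity of $g'$ to push $g'$ past the auxiliary supremum over $u$, and then applies the Ledoux--Talagrand contraction principle to the $1$-Lipschitz, positive-homogeneous $\sigma$, incurring the factor of $2$. All other steps in the plan are routine: monotonicity of $g$, the $\ell_2$-versus-$\ell_\infty$ norm comparison used to collapse the sum in the square root into a max, and standard $\sup$/$\max$ commutation over a finite index set.
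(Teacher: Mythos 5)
Your proof is correct and follows essentially the same route as the paper's: the genuinely new ingredient in both is the collapse $\sqrt{\sum_{j=1}^q a_j^2}\le \sqrt{q}\,\max_j a_j$ of the branch sum inside $g$, after which the claim reduces to the standard single-branch vector peeling inequality. The only difference is organizational: you black-box that single-branch step by applying Lemma~2 of~\cite{2017arXiv171206541G} to the pooled class $\bigcup_{j}\{f_j : f\in\mathcal{F}\}$ with $g'(t)=g(\sqrt{q}\,t)$, whereas the paper performs the collapse after first reducing each $W_j$ to a single row of norm $R$ and then reproves the symmetrization step $g(|z|)\le g(z)+g(-z)$ and the Ledoux--Talagrand contraction inline.
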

By applying this lemma $L-1$ times with $g=\exp$ and $f$ representing the neurons preceding a certain neuron at a certain layer, we obtain the following inequality
\begin{small}
\begin{equation*}
\begin{aligned}
&\mathbb{E}_\xi \sup _{f_w} \exp \left\vert  \sum_{i=1}^m \xi_i \cdot f_w(x_i) \right\vert \\
~&\leq~ 2^{L}\mathbb{E}_\xi \sup_{j,w} \exp\sqrt{\|w^L\|^2_2\prod^{L-1}_{l=1}\|w^l\|^2_F\cdot 2^L\left\vert \sum_{i=1}^m \xi_i x_{ij} \right\vert^2} \\
~&\leq~ 2^{L}\sum^{d}_{j=1}\mathbb{E}_\xi \exp\left(\lambda 2^{L/2}\rho \cdot\left\vert \sum_{i=1}^m \xi_i x_{ij} \right\vert\right) \\
~&\leq~ 4^{L}\sup_j \exp\left(\fr{\lambda^2 2^{L}\rho^2 \cdot \sum_{i=1}^m x_{ij}^2}{2} + \lambda 2^{L/2}\rho \cdot \sqrt{\sum_{i=1}^m x^2_{ij}}\right), \\
\end{aligned}
\end{equation*}
\end{small}
 where the last inequality follows from standard concentration bounds. Finally, by \eqref{eq:start} and properly adjusting $\lambda$, we can finally bound $\mathcal{R}_{X}(\mathcal{F}_{G,\rho})$ as $\mathcal{O}(\fr{\sqrt{L}\rho}{\sqrt{m}})$.

\section{Experiments}\label{sec:experiments}

\begin{figure}[t]
    \centering
    \includegraphics[width=0.9\linewidth]{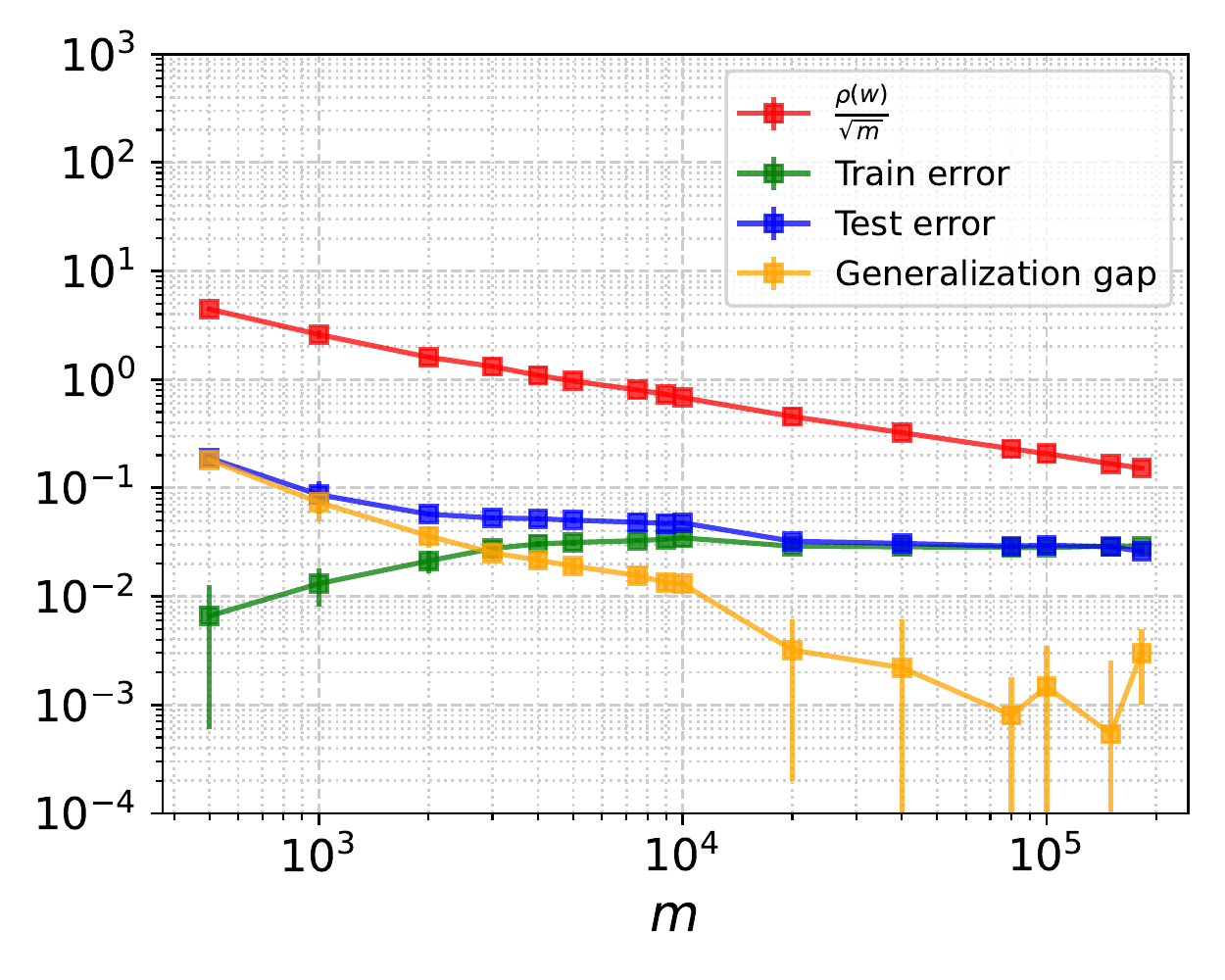}\\
     \includegraphics[width=0.9\linewidth]{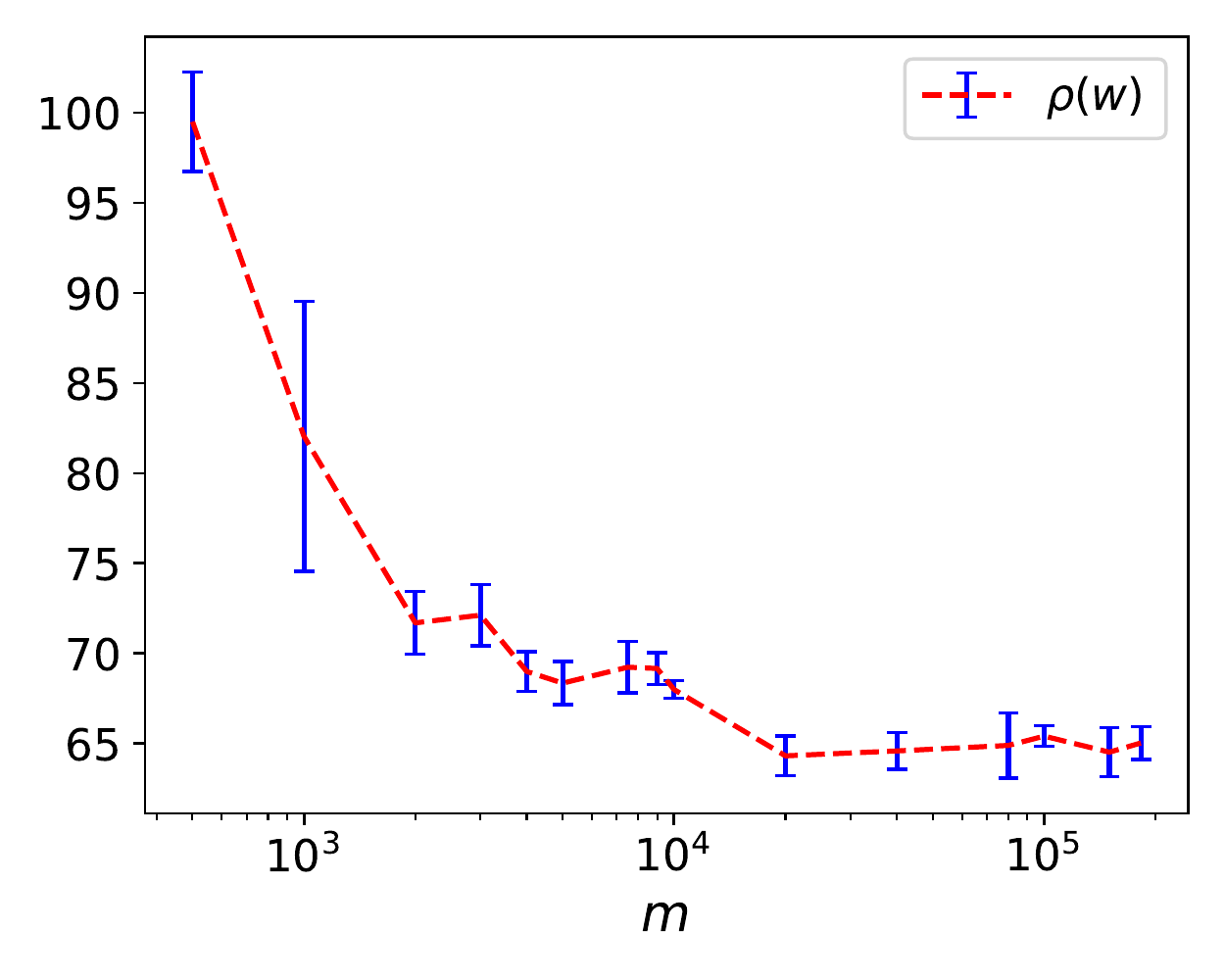} \\
    \caption{{\bf Comparing our bound with the train and test errors and the generalization gap of a 5-layers network.} {\bf (top)} We report $\fr{\rho(w)}{\sqrt{m}}$, the train error $\err_{S}(f_w)$, the test error $\err_{P}(f_w)$ and the generalization gap $\vert \err_{P}(f_w)-\err_{S}(f_w)\vert$ when varying the number of training samples (in logarithmic scales). {\bf (bottom)} We display the value of $\rho(w)$ when varying the number of training samples. We used the following hyperparameters: $\rho^l=0.1$, $\lambda = \expnumber{1}{-3}$. More detailed results can be found in Table~\ref{tab:5L_model_bound}.}
    \label{fig:5layers}
\end{figure}

\begin{figure}[t]
\centering
\includegraphics[width=0.9\linewidth]{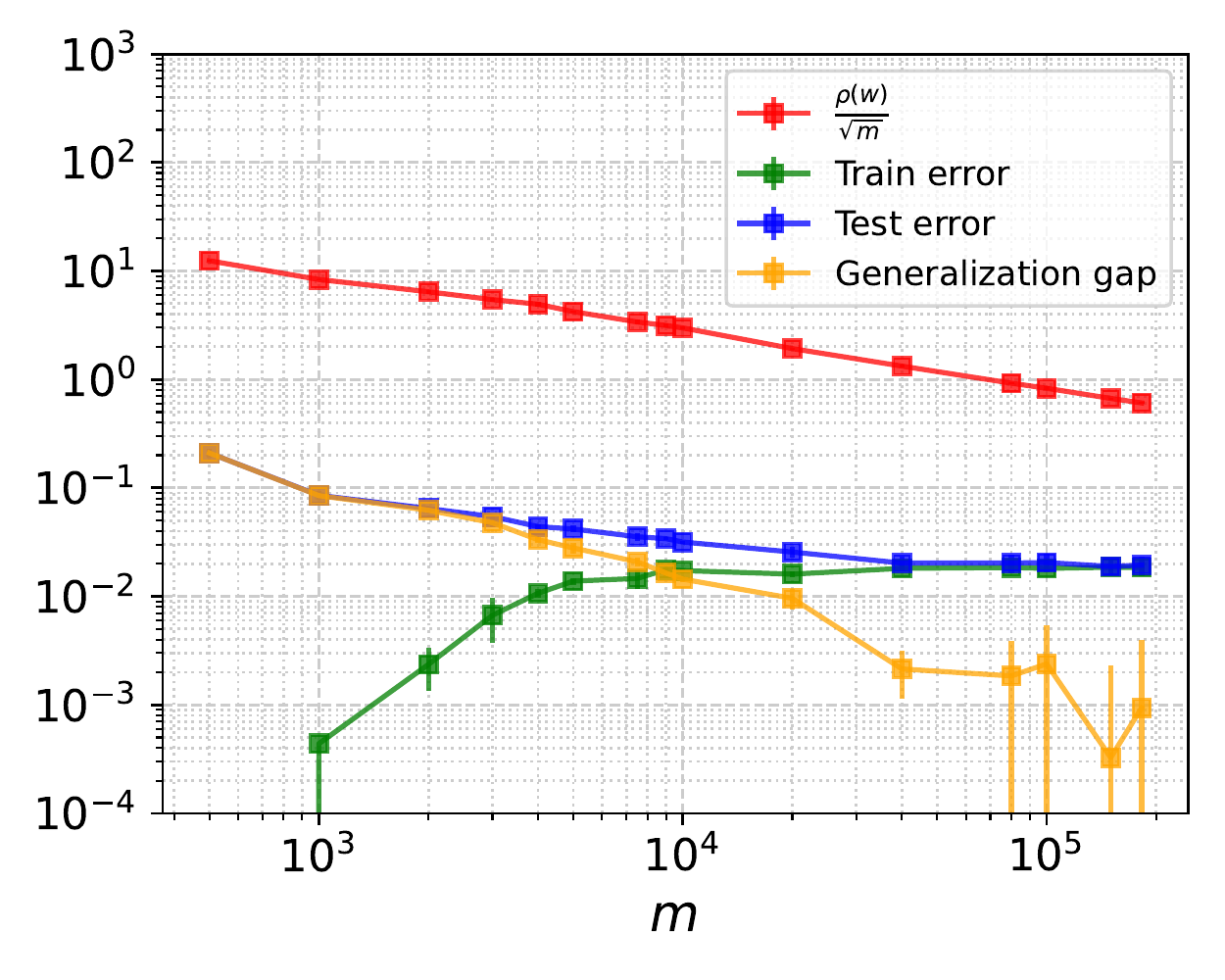} \\
\includegraphics[width=0.9\linewidth]{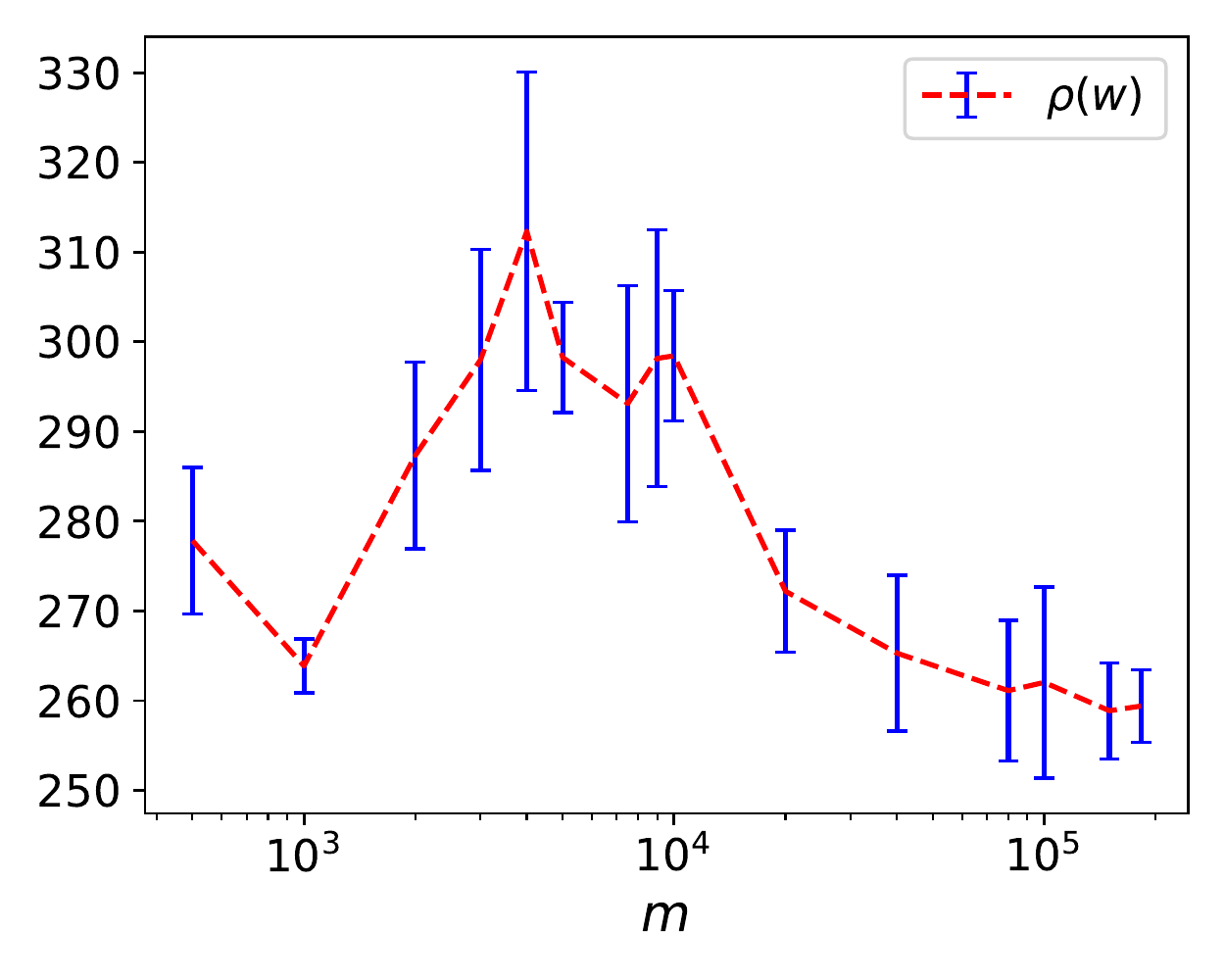}   \\
\caption{{\bf Comparing our bound with the generalization gap and the test error of a 6-layers network.} See Figure~\ref{fig:5layers} for details. More detailed numerical results can be found in Table~\ref{tab:6L_model_bound}. We used the following hyperparameters: $\rho^l=0.01$, $\lambda = \expnumber{5}{-4}$. }
\label{fig:6layers}
\end{figure}

In section~\ref{sec:theory} we showed that the Rademacher complexity of compositionally sparse networks is bounded by $\mathcal{O}(\fr{\rho(w)}{\sqrt{m}})$ (when $\max_{i \in [m]}\|x_i\|$ and $L$ are constants). In this section, we conduct an empirical evaluation of the performance, the term $\fr{\rho(w)}{\sqrt{m}}$ and $\rho(w)$ for neural networks trained with a varying number of training samples. Further, in Appendix~\ref{app:additionalexp}, we provide additional experiments that illustrate the evolution of these quantities throughout the training process. 

{\bf Network architecture.\enspace} We use two types of deep neural network architectures. Both consist of four hidden convolutional layers, which use $3\times 3$ convolutions, stride 2, and padding 0, and have output channel numbers of 32, 64, 128, and 128, respectively. The final fully connected layer maps the 3200-dimensional output of the final convolutional layer to 2 outputs, with ReLU activation applied to all layers except the last one. 
The second architecture is identical, but it replaces the last linear fully connected layer with two fully connected layers that project the 3200-dimensional output of the final convolutional layer to a 128-dimensional vector before mapping it to 2 outputs. The total number of parameters in the first model is 246886 and in the second is 650343 parameters.

{\bf Optimization process.\enspace} In Theorem~\ref{thm:genbound}, a generalization bound is proposed that scales with $\rho(w)=\|w^L\|_2 \cdot \prod^{L-1}_{l=1}\|w^l\|_F$. To control $\rho(w)$, we regularize $\prod^{L}_{l=1}\|w^l\|_F \geq \rho(w)$ by applying weight normalization to all trainable layers, except for the last one, which is left un-normalized. Specifically, we fix the norm of the weights $w^l$ in each layer by decomposing them into a direction and magnitude, such that $w^l=\rho^l v^l$ (where $\|v^l\|_F=1$). To initialize $w^l$, we use the default PyTorch initialization and normalize $v^l$ to have a norm of $1$. We only update $v^l$ using the method described in~\cite{https://doi.org/10.48550/arxiv.1602.07868} while keeping $\rho^1,\dots,\rho^{L-1}$ constant. This way we can regularize $\prod^{L}_{l=1}\|w^l\|_F$, by applying weight decay of rate $\lambda$ exclusively to the weights of the top layer. 

Each model was trained using MSE-loss minimization between the logits of the network and the one-hot encodings of the training labels. To train the model we used the Stochastic Gradient Descent (SGD) optimizer with an initial learning rate $\mu=0.03$, momentum of $0.9$, batch size 128, and a cosine learning rate scheduler~\citep{https://doi.org/10.48550/arxiv.1608.03983}.

{\bf Varying the number of samples.\enspace} In this experiment we trained the same model for binary classification between the first two classes of the CIFAR-5m dataset~\citep{https://doi.org/10.48550/arxiv.2010.08127} with a varying number of training samples. This dataset contains 6 million synthetic CIFAR-10-like images (including the CIFAR10 dataset). It was generated by sampling the DDPM generative model of~\cite{ho2020denoising}, which was trained on the CIFAR-10 training set. For each number of samples $m$, we chose $m$ random training samples from the dataset and trained the model on these samples for 5000 epochs over 5 different runs. 

In Figures~\ref{fig:5layers}-\ref{fig:6layers}, we report the values of $\rho(w)$, $\frac{\rho(w)}{\sqrt{m}}$, the train and test errors, and the generalization gap for each model as a function of $m$. Each quantity is averaged over the last 100 training epochs (i.e., epochs 4900-5000). Since we do not have access to the complete population distribution $P$, we estimated the test error by using $1000$ test samples per class. As seen in the figures, $\frac{\rho(w)}{\sqrt{m}}$ provides a relatively tight estimation of the generalization gap even though the network is overparameterized. For example, when $m$ is greater than $10000$, the quantity $\frac{\rho(w)}{\sqrt{m}}$ is smaller than 1 for the 5-layer model. Additionally, it is observed that $\rho(w)$ is bounded as a function of $m$, even though it could potentially increase with the size of the training dataset. Therefore, $\frac{\rho(w)}{\sqrt{m}}$ appears to decrease at a rate of $\mathcal{O}(1/\sqrt{m})$.

\section{Conclusions}

We studied the question of why certain deep learning architectures, such as CNNs and Transformers, perform better than others on real-world datasets. To tackle this question, we derived Rademacher complexity generalization bounds for sparse neural networks, which are orders of magnitude better than a naive application of standard norm-based generalization bounds for fully-connected networks. In contrast to previous papers~\cite{Long2020Generalization,ledent}, our results do not rely on parameter sharing between filters, suggesting that the sparsity of the neural networks is the critical component to their success. This sheds new light on the central question of why certain architectures perform so well and suggests that sparsity may be a key factor in their success. Even though our bounds are not practical in general, our experiments show that they are quite tight for simple classification problems, unlike other bounds based on parameter counting, suggesting that the underlying theory is sound and does not need a basic reformulation.

\newpage

\section*{Acknowledgments}
We thank Akshay Rangamani, Eran Malach and Antoine Ledent for illuminating discussions during the preparation of this manuscript. This material is based upon work supported by the Center for Minds, Brains and Machines (CBMM), funded by NSF STC award CCF-1231216.

\bibliography{refs}
\bibliographystyle{icml2023}

\newpage
\appendix
\onecolumn

\section{Additional Experiments}\label{app:additionalexp}

\subsection{Additional Details for the Experiments in Figures~\ref{fig:5layers}-\ref{fig:6layers}} 

In Figures~\ref{fig:5layers}-\ref{fig:6layers} we provided multiple plots demonstrating the behaviors of various quantities (e.g., $\rho(w)$, the train and test errors) when varying the number of training samples $m$. For completeness, in Tables~\ref{tab:5L_model_bound}-\ref{tab:6L_model_bound} we explicitly report the values of each quantity reported in Figures~\ref{fig:5layers}-\ref{fig:6layers}.

\begin{table}[ht]
    \centering
    \begin{tabular}{|l|l|l|l|l|l|l|l|}
    \hline
        $m$ & $\rho(w)$ & Train error & Test error & Train loss & Test loss & Generalization gap & $\frac{\rho(w)}{\sqrt{m}}$ \\\hline 
        500 & 99.499 & 0.007 & 0.188 & 0.032 & 0.161 & 0.182 & 4.450 \\ \hline
        1000 & 82.050 & 0.013 & 0.087 & 0.038 & 0.085 & 0.074 & 2.595 \\ \hline
        2000 & 71.701 & 0.021 & 0.057 & 0.042 & 0.062 & 0.036 & 1.603 \\ \hline
        3000 & 72.128 & 0.028 & 0.053 & 0.045 & 0.058 & 0.025 & 1.317 \\ \hline
        4000 & 69.004 & 0.030 & 0.052 & 0.047 & 0.056 & 0.022 & 1.091 \\ \hline
        5000 & 68.359 & 0.031 & 0.05 & 0.048 & 0.056 & 0.019 & 0.967 \\ \hline
        7500 & 69.241 & 0.033 & 0.048 & 0.048 & 0.052 & 0.016 & 0.800 \\ \hline
        9000 & 69.172 & 0.034 & 0.047 & 0.048 & 0.052 & 0.013 & 0.729 \\ \hline
        10000 & 68.003 & 0.035 & 0.048 & 0.049 & 0.052 & 0.013 & 0.68 \\ \hline
        20000 & 64.326 & 0.029 & 0.032 & 0.044 & 0.046 & 0.003 & 0.455 \\ \hline
        40000 & 64.598 & 0.029 & 0.031 & 0.044 & 0.045 & 0.003 & 0.323 \\ \hline
        80000 & 64.904 & 0.028 & 0.029 & 0.043 & 0.044 & 0.004 & 0.229 \\ \hline
        100000 & 65.418 & 0.028 & 0.030 & 0.043 & 0.043 & 0.001 & 0.207 \\ \hline
        150000 & 64.530 & 0.029 & 0.029 & 0.044 & 0.044 & 0.001 & 0.167 \\ \hline
        182394 & 65.040 & 0.029 & 0.026 & 0.044 & 0.043 & 0.003 & 0.152 \\ \hline
    \end{tabular}
    \caption{We report the averaged values of the norm $\rho(w)$, the train and test errors, the training and test losses, the generalization gap, and $\fr{\rho(w)}{\sqrt{m}}$ for the experiment in Figure~\ref{fig:5layers}.}
    \label{tab:5L_model_bound}
\end{table}

\begin{table}[ht]
    \centering
    \begin{tabular}{|l|l|l|l|l|l|l|l|}
    \hline
        $m$ & $\rho(w)$ & Train error & Test error & Train loss & Test loss & Generalization gap & $\frac{\rho(w)}{\sqrt{m}}$ \\\hline 
        500 & 277.829 & 0.000 & 0.210 & 0.005 & 0.177 & 0.210 & 12.425 \\ \hline
        1000 & 263.867 & 0.000 & 0.085 & 0.008 & 0.068 & 0.085 & 8.344 \\ \hline
        2000 & 287.343 & 0.002 & 0.065 & 0.012 & 0.052 & 0.062 & 6.425 \\ \hline
        3000 & 297.993 & 0.007 & 0.054 & 0.015 & 0.045 & 0.048 & 5.441 \\ \hline
        4000 & 312.316 & 0.011 & 0.044 & 0.017 & 0.037 & 0.033 & 4.938 \\ \hline
        5000 & 298.258 & 0.014 & 0.042 & 0.019 & 0.036 & 0.028 & 4.218 \\ \hline
        7500 & 293.125 & 0.015 & 0.035 & 0.021 & 0.032 & 0.021 & 3.385 \\ \hline
        9000 & 298.155 & 0.018 & 0.034 & 0.022 & 0.031 & 0.016 & 3.143 \\ \hline
        10000 & 298.442 & 0.017 & 0.032 & 0.022 & 0.029 & 0.014 & 2.984 \\ \hline
        20000 & 272.198 & 0.016 & 0.026 & 0.018 & 0.024 & 0.010 & 1.925 \\ \hline
        40000 & 265.294 & 0.018 & 0.020 & 0.019 & 0.021 & 0.003 & 1.326 \\ \hline
        80000 & 261.114 & 0.018 & 0.020 & 0.020 & 0.020 & 0.004 & 0.923 \\ \hline
        100000 & 262.034 & 0.018 & 0.021 & 0.019 & 0.021 & 0.004 & 0.829 \\ \hline
        150000 & 258.874 & 0.019 & 0.019 & 0.020 & 0.020 & 0.003 & 0.668 \\ \hline
        182394 & 259.392 & 0.018 & 0.019 & 0.020 & 0.020 & 0.002 & 0.607 \\ \hline
    \end{tabular}
     \caption{We report the averaged values of the norm $\rho(w)$, the train and test errors, the training and test losses, the generalization gap, and $\fr{\rho(w)}{\sqrt{m}}$ for the experiment in Figure~\ref{fig:6layers}.}\label{tab:6L_model_bound}
\end{table}

\subsection{Evaluating Networks During Training} 

In section~\ref{sec:experiments}, we examined the behavior of $\rho(w)$, the train and test errors, and our bound while varying the number of training samples. In this section, we conduct supplementary experiments that compare these quantities throughout the training process. Additionally, to add diversity to the study, we utilize a slightly different training method in these experiments.

{\bf Network architecture.\enspace} In this experiment, we employed a simple convolutional network architecture denoted by CONV-$L$-$H$. The network consists of a stack of two $2\times 2$ convolutional layers with a stride of 2 and zero padding, utilizing ReLU activations. This is followed by $L-2$ stacks of $3\times 3$ convolutional layers with $H$ channels, a stride of 1, and padding of 1, also followed by ReLU activations. The final layer is a fully-connected layer. No biases are used in any of the layers. 

{\bf Optimization process.\enspace} In the current experiment we trained each model with a standard weight normalization~\cite{https://doi.org/10.48550/arxiv.1602.07868} for each parametric layer. Each model was trained using MSE-loss minimization between the logits of the network and the one-hot encodings of the training labels. To train the model we used the Stochastic Gradient Descent (SGD) optimizer with an initial learning rate $\mu$ that is decayed by a factor of $0.1$ at epochs 60, 100, 300, momentum of $0.9$ and weight decay with rate $\lambda$.

In Figure~\ref{fig:mnist_depth}, we present the results of our experimentation where we trained models of varying depths on the MNIST dataset~\cite{lecun-mnisthandwrittendigit-2010}. One of the key observations from our experiment is that as we increase the depth of the model, the term $\fr{\rho(w)}{\sqrt{m}}$ empirically generally decreases, even though the overall number of training parameters grows with the number of layers. This is in correlation with the fact that the generalization gap is lower for deeper networks. suggests that deeper models have a better generalization ability despite having more parameters. Furthermore, we also observed that in all cases, the term $\frac{\rho(w)}{\sqrt{m}}$ is quite small, reflecting the tightness of our bound. 

In Figures~\ref{fig:mnist_width} and \ref{fig:fmnist_width}, we present the results of an experiment where we varied the number of channels $H$ in models trained on MNIST and Fashion MNIST (respectively). As can be seen, the bound remains largely unchanged when increasing $H$ despite the network's size scaling as $\Theta(H^2)$. We also observed that, after the network achieves good performance, the bound is highly correlated with the generalization gap. Specifically, for MNIST, the generalization gap and the bound are relatively stable, while for Fashion-MNIST, the bound seems to grow at the same rate as the generalization gap. Since the results are presented in log-scales, this suggests that the generalization gap is empirically proportional to our bound. 

\begin{figure}[t]
\centering
\begin{tabular}{c@{~}c@{~}c@{~}c}
\includegraphics[width=0.3\linewidth]{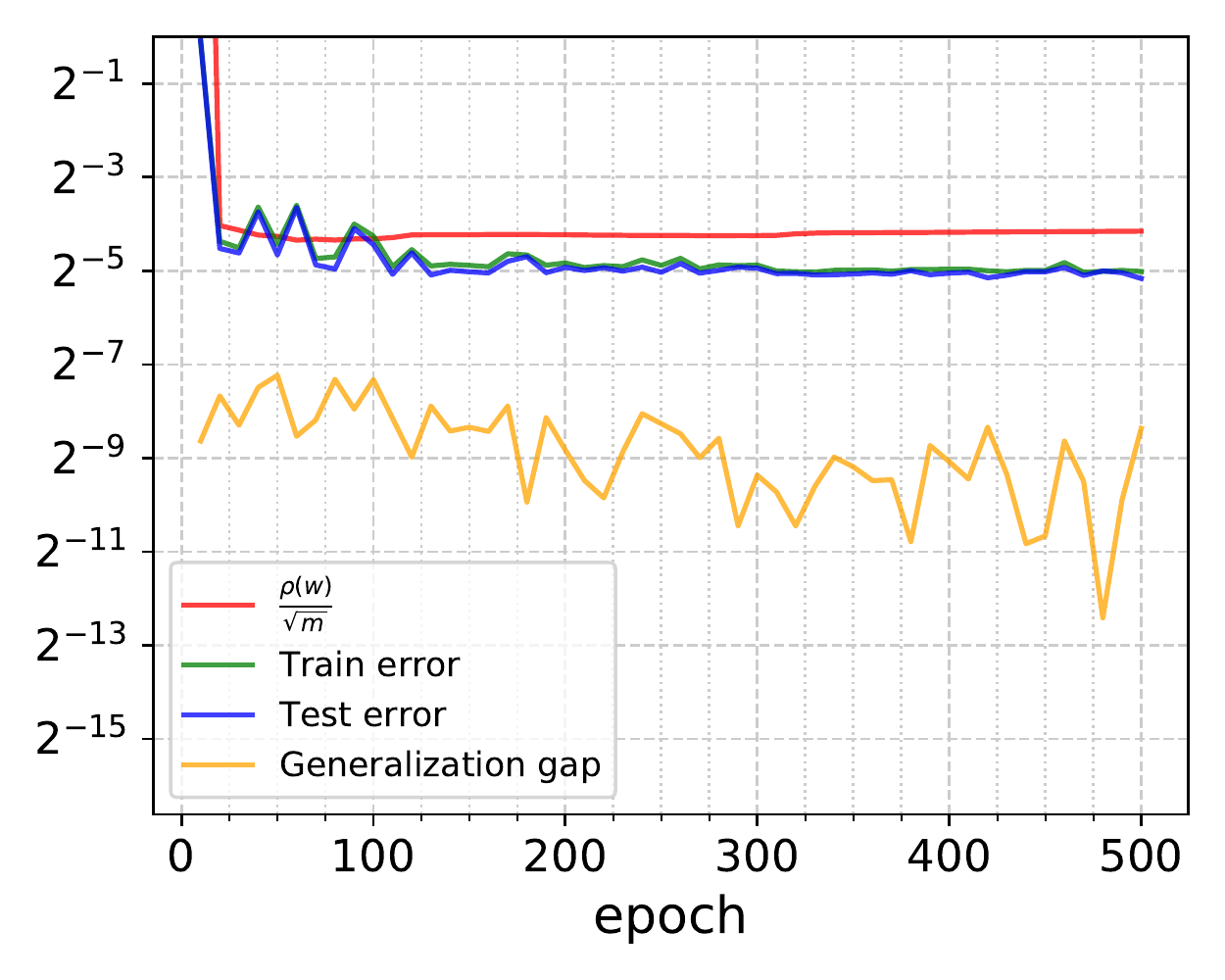} & 
\includegraphics[width=0.3\linewidth]{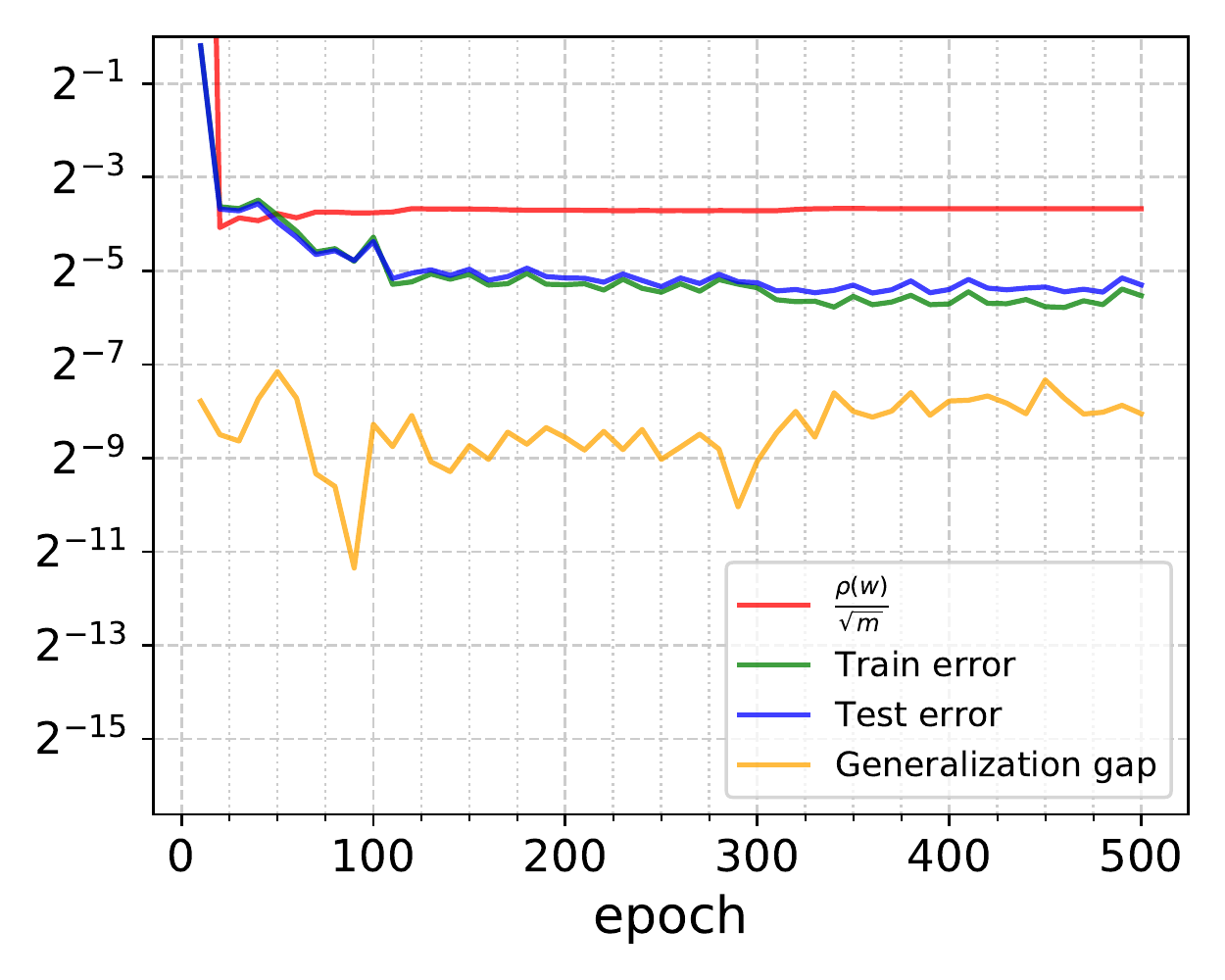} & 
\includegraphics[width=0.3\linewidth]{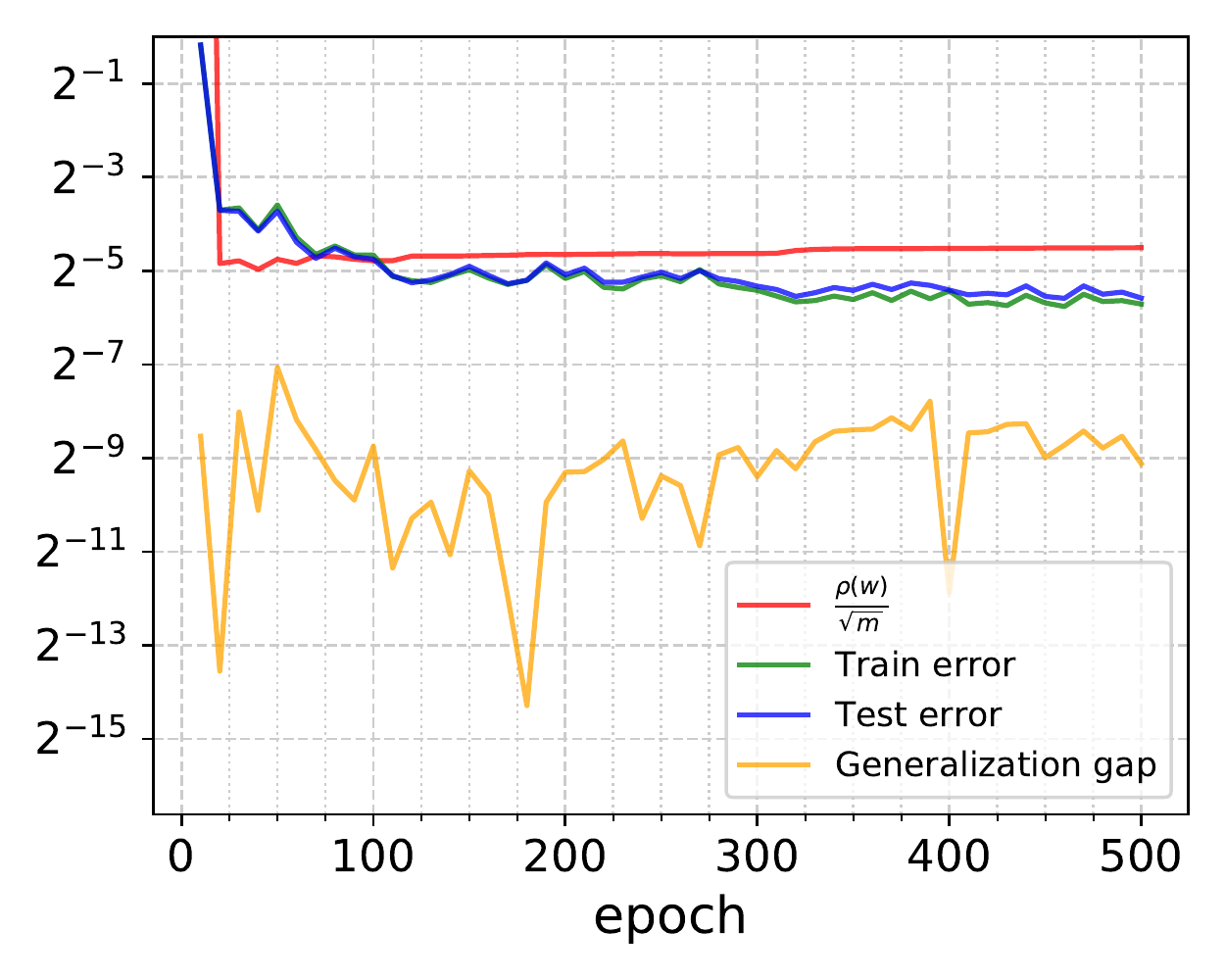} \\
\includegraphics[width=0.3\linewidth]{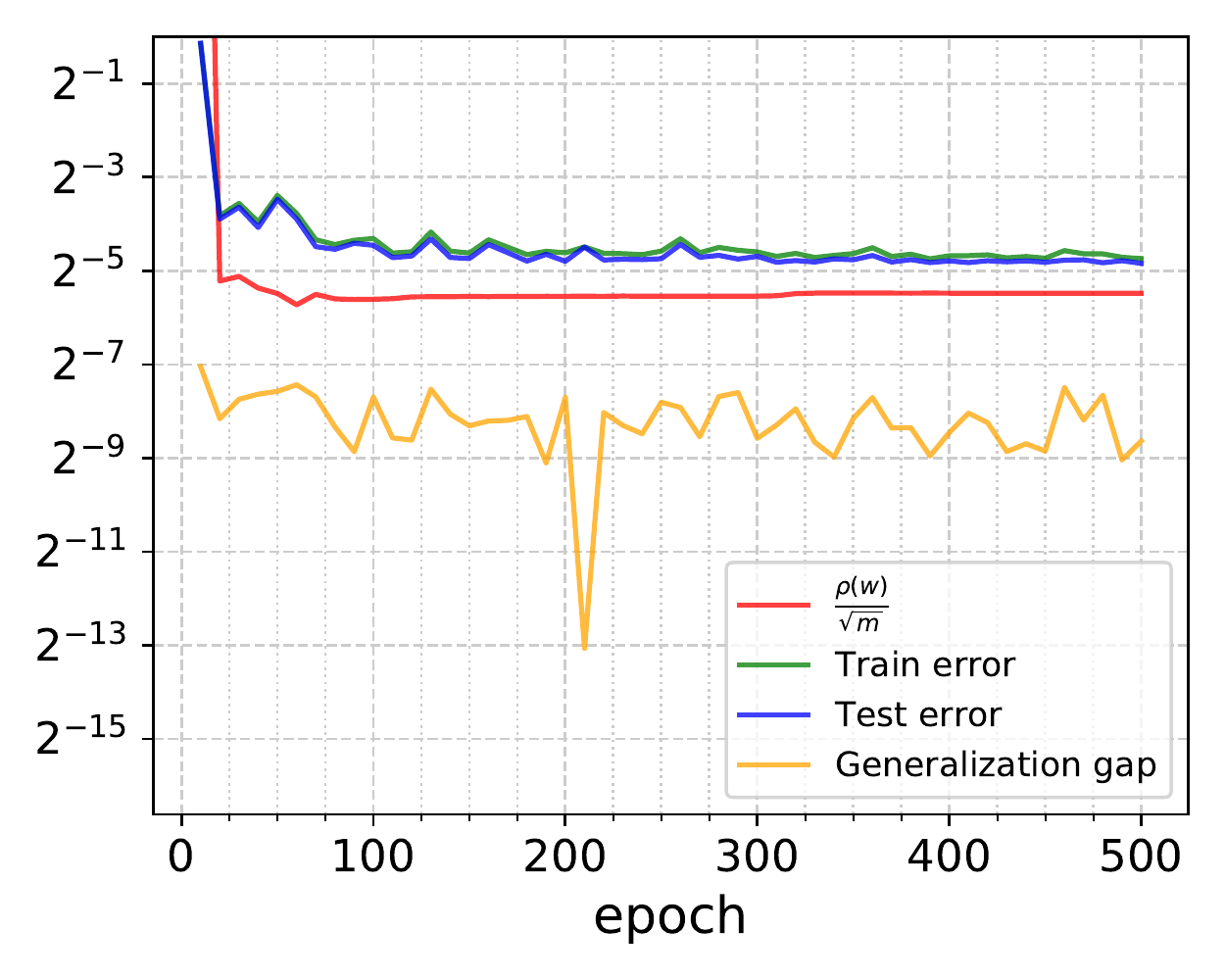} & 
\includegraphics[width=0.3\linewidth]{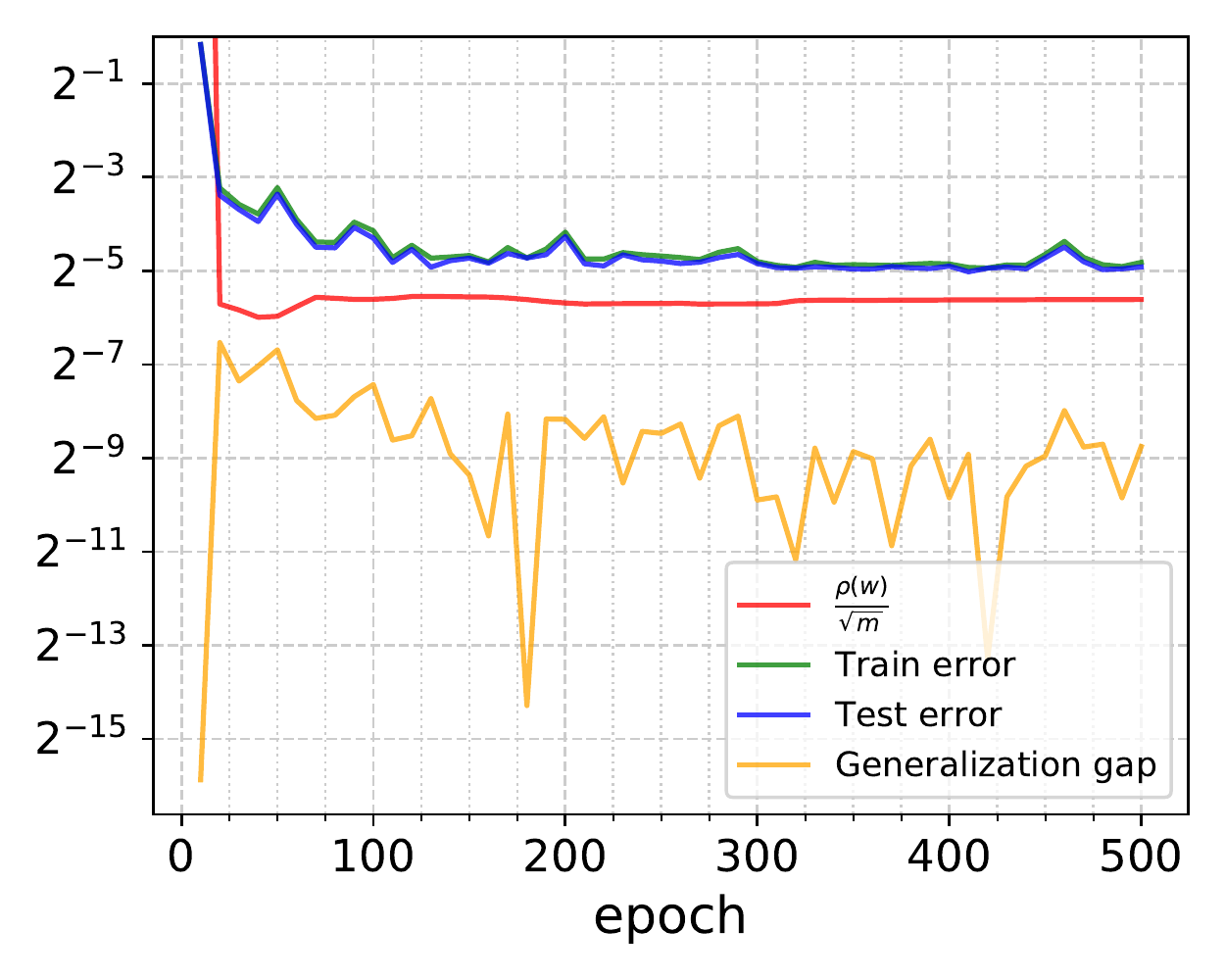} & 
\includegraphics[width=0.3\linewidth]{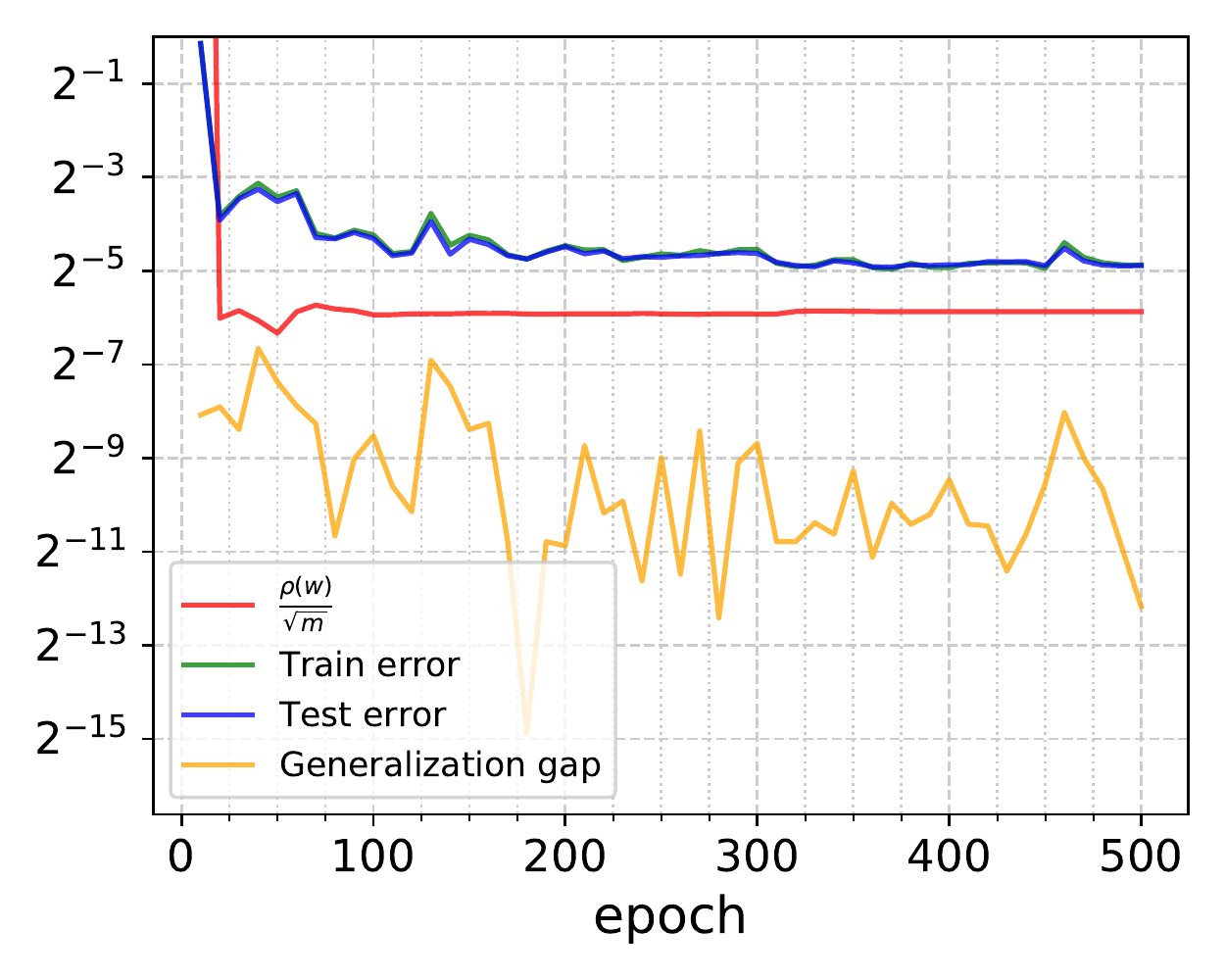} \\
$L=6$ & $L=7$ & $L=8$ \\
\end{tabular}
\caption{{\bf Varying the number of layers.}  We report $\fr{\rho(w)}{\sqrt{m}}$, the train error $\err_{S}(f_w)$, the test error $\err_{P}(f_w)$ and the generalization gap $\vert \err_{P}(f_w)-\err_{S}(f_w)\vert$ of CONV-$L$-$1000$ trained on MNIST with a varying number of layers. We trained the models with batch size 64 and learning rate $\mu=1$. For the top plots we used $\lambda=\expnumber{2}{-3}$ and for the bottom ones we used $\lambda=\expnumber{3}{-3}$.}
\label{fig:mnist_depth}
\end{figure}

\begin{figure}[t]
\centering
\begin{tabular}{c@{~}c@{~}c@{~}c}
\includegraphics[width=0.3\linewidth]{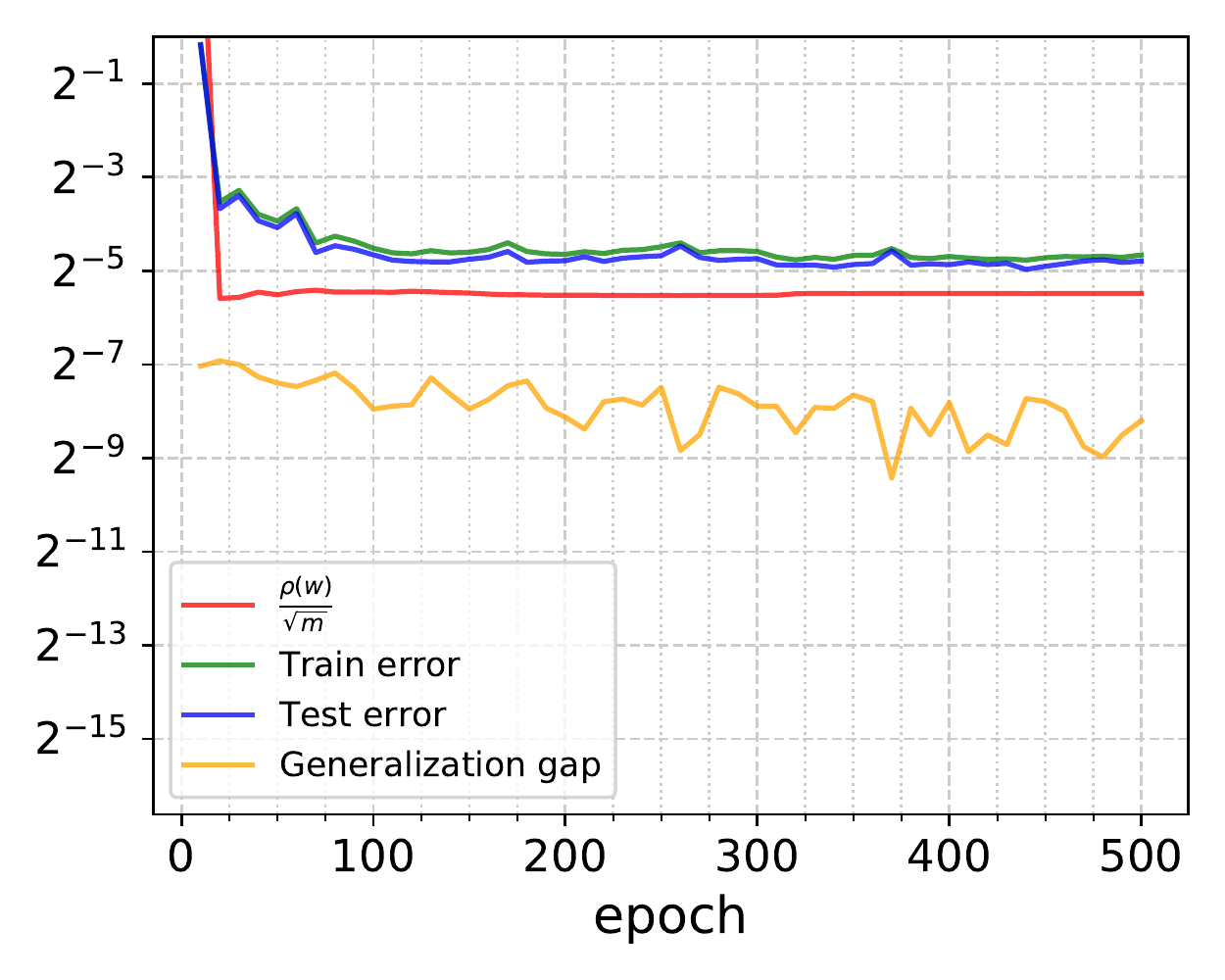} & 
\includegraphics[width=0.3\linewidth]{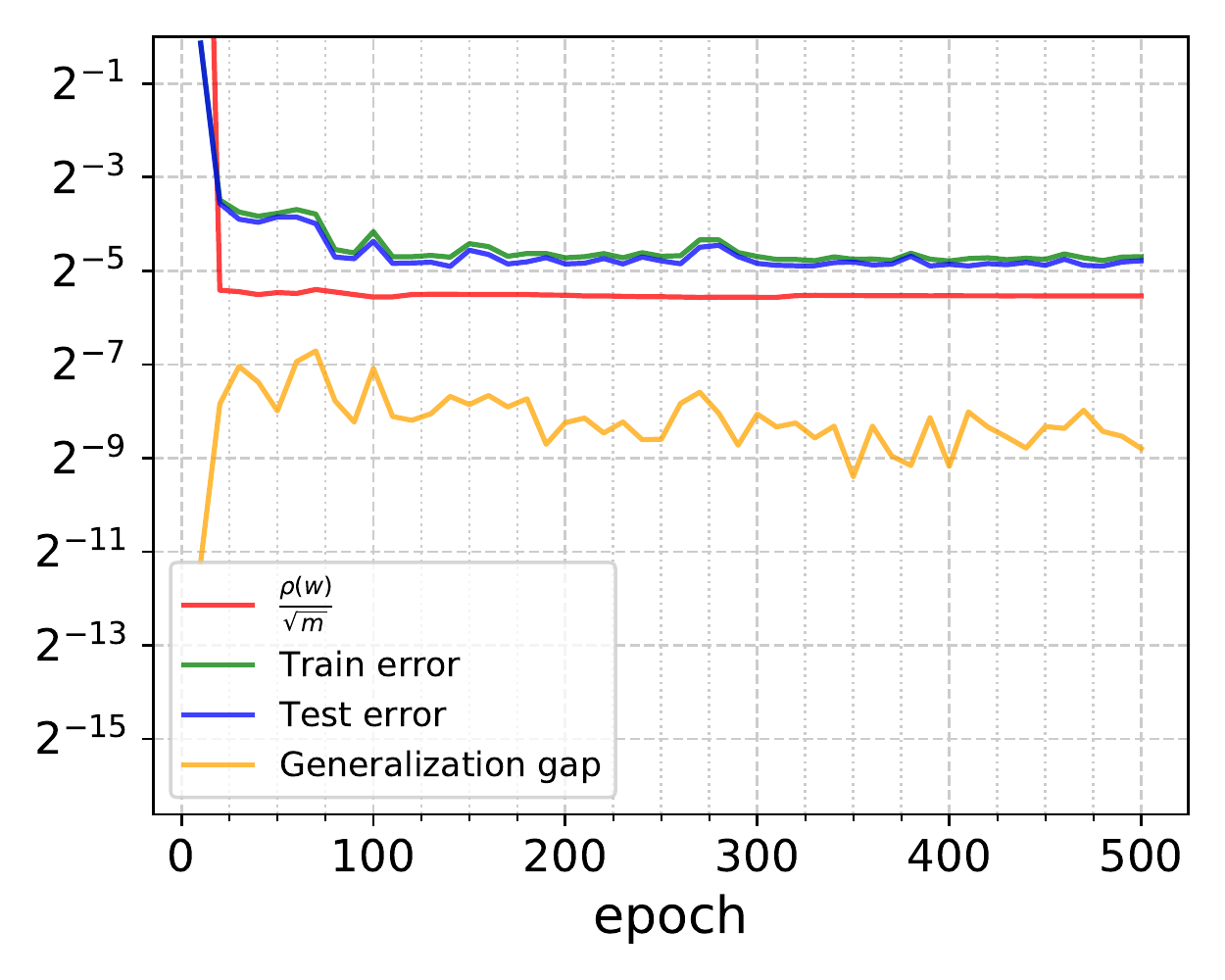} & 
\includegraphics[width=0.3\linewidth]{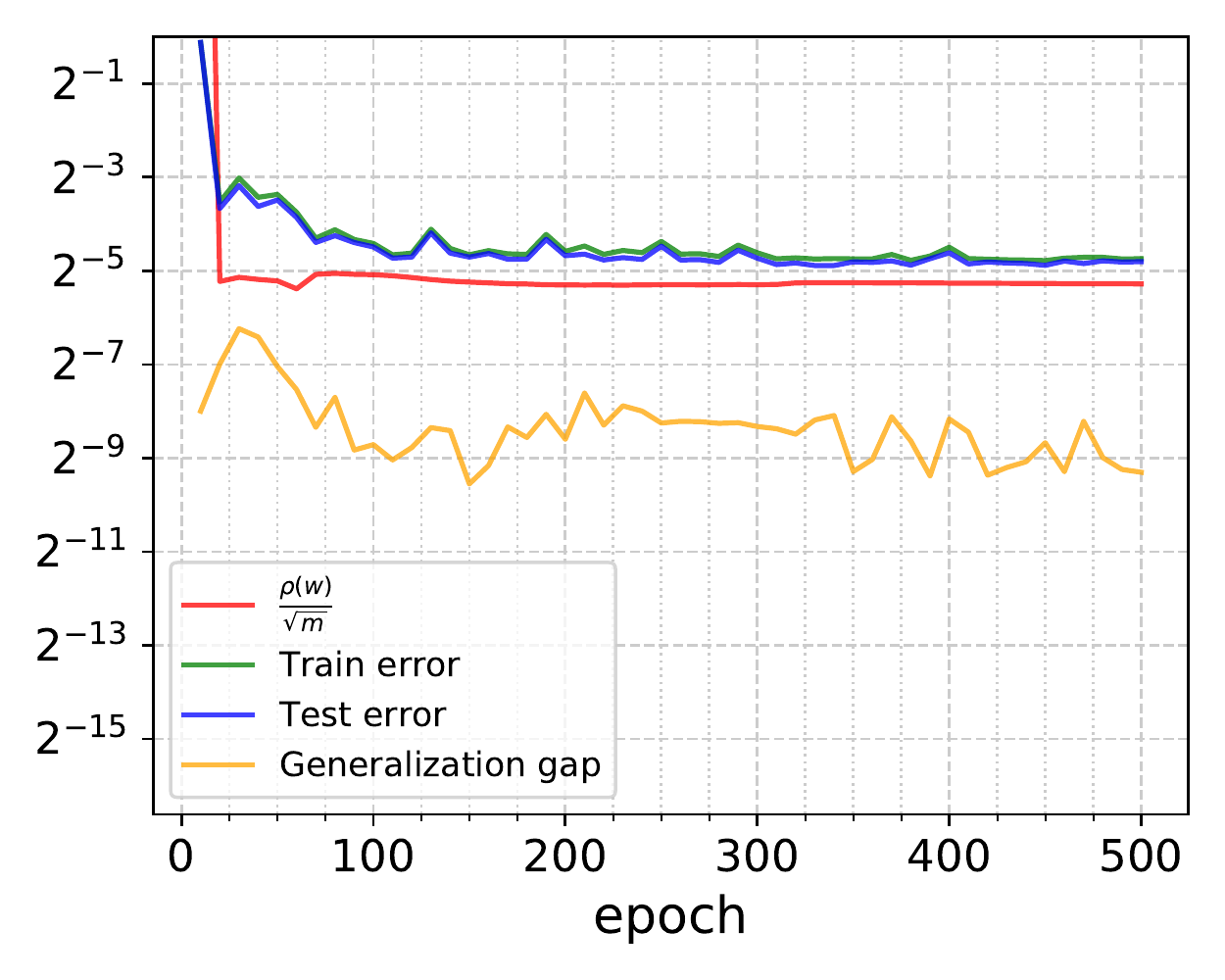}\\
$H=50$ & $H=500$ & $H=1500$
\end{tabular}
\caption{{\bf Varying the number of channels.}  We report $\fr{\rho(w)}{\sqrt{m}}$, the train error $\err_{S}(f_w)$, the test error $\err_{P}(f_w)$ and the generalization gap $\vert \err_{P}(f_w)-\err_{S}(f_w)\vert$ of CONV-$6$-$H$ trained on MNIST with a varying number of channels. We trained the models with batch size 64, $\mu=1$, and  $\lambda=\expnumber{3}{-3}$.}
\label{fig:mnist_width}
\end{figure}

\begin{figure}[t]
\centering
\begin{tabular}{c@{~}c@{~}c@{~}c}
\includegraphics[width=0.3\linewidth]{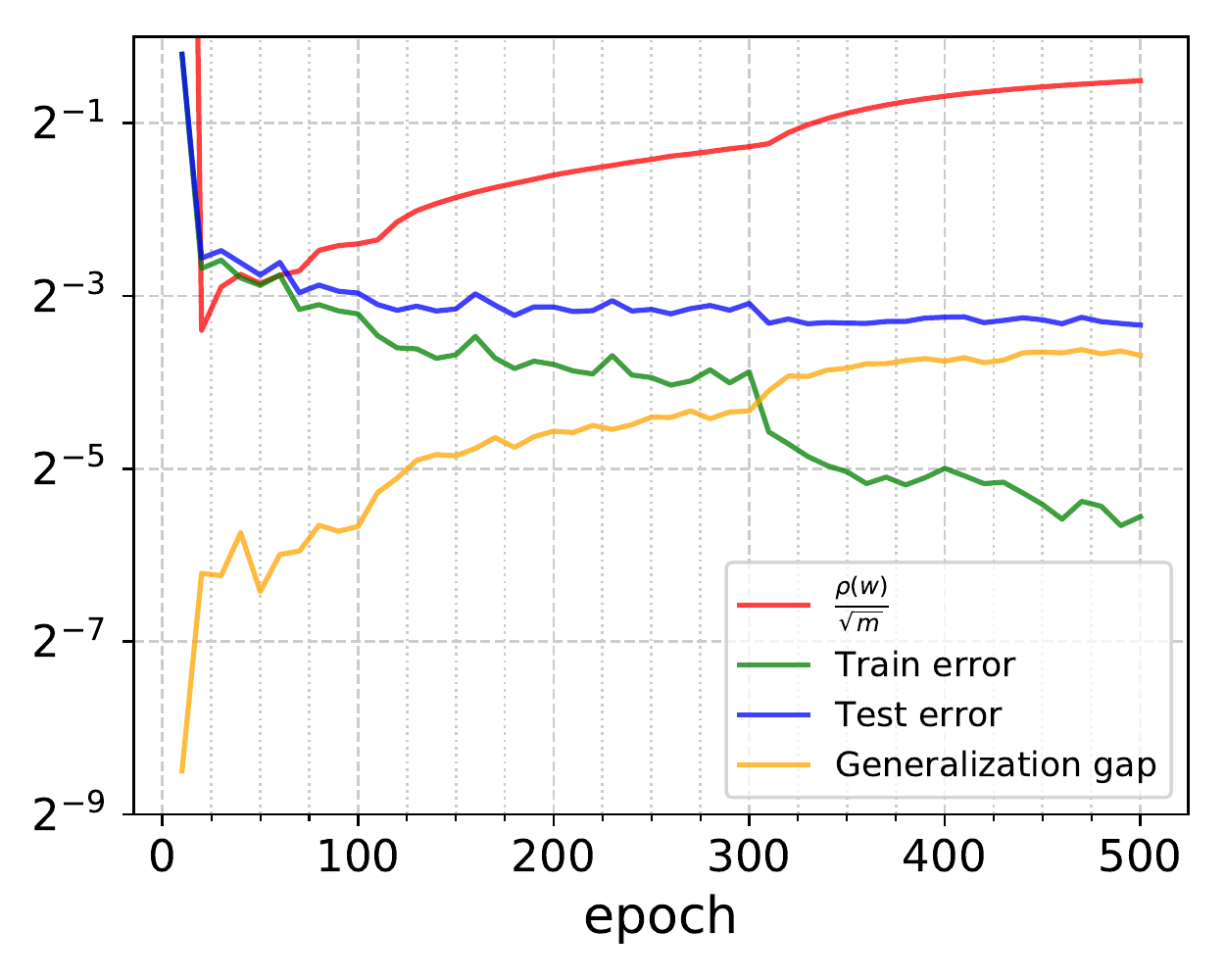} & 
\includegraphics[width=0.3\linewidth]{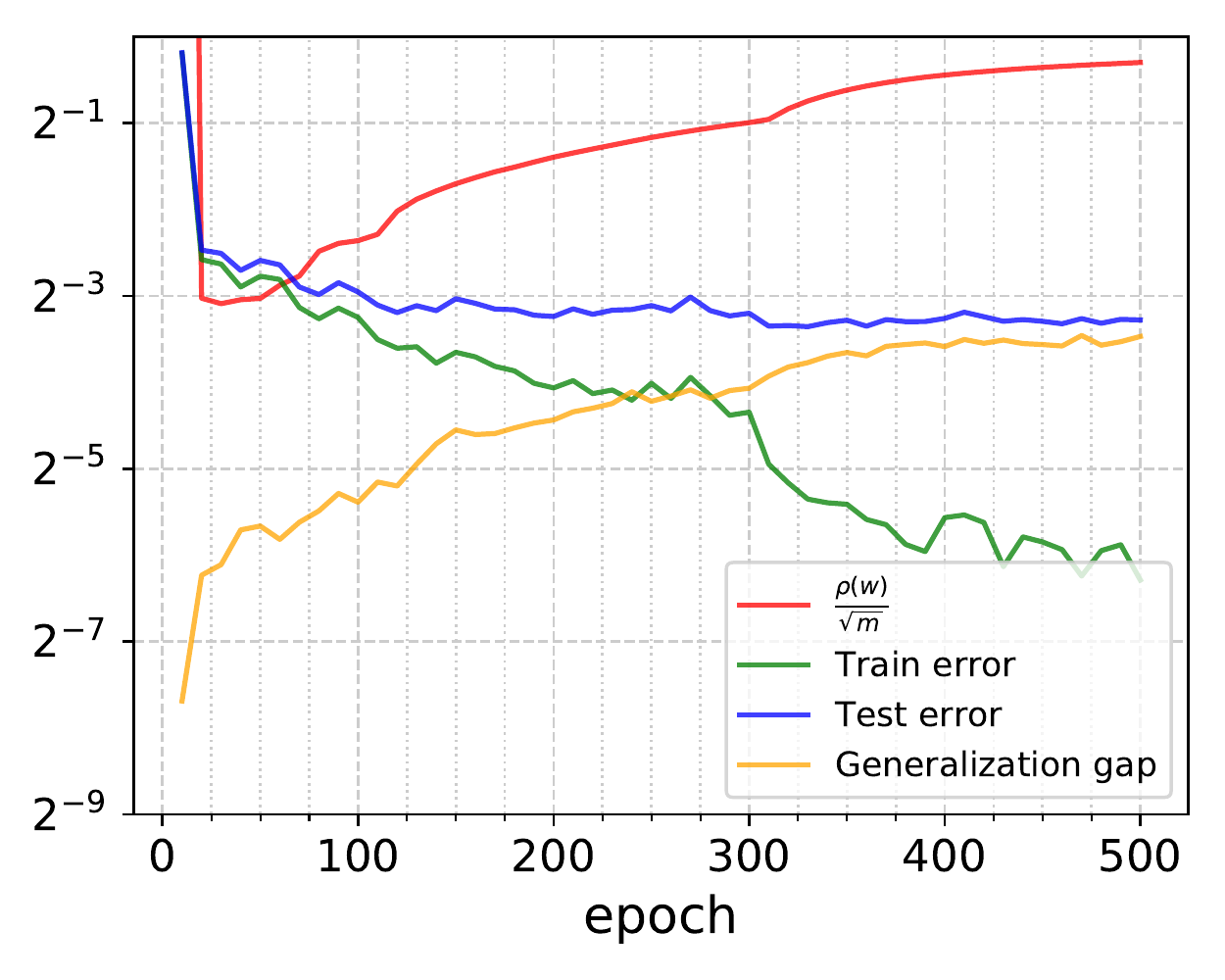} & 
\includegraphics[width=0.3\linewidth]{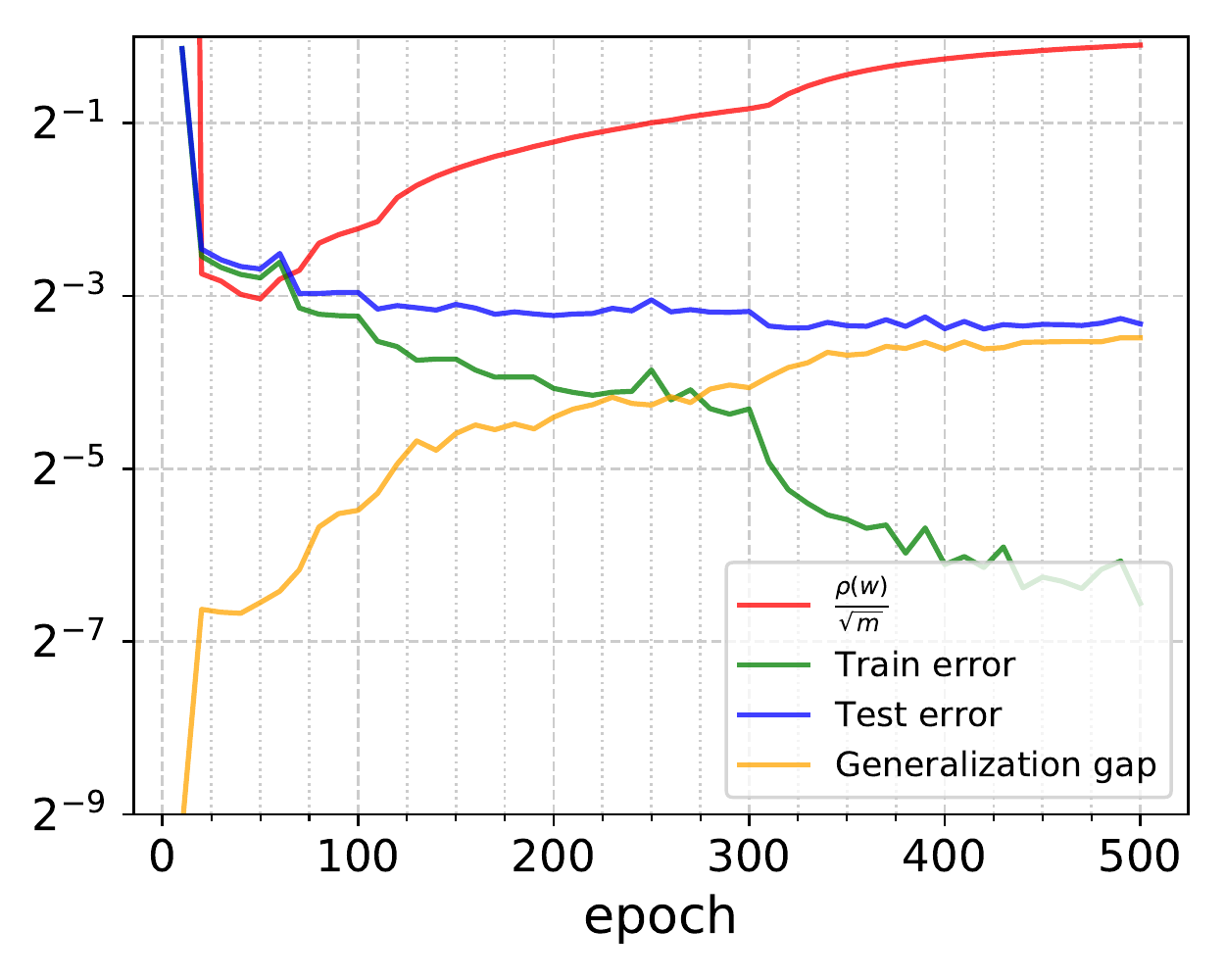}\\
\includegraphics[width=0.3\linewidth]{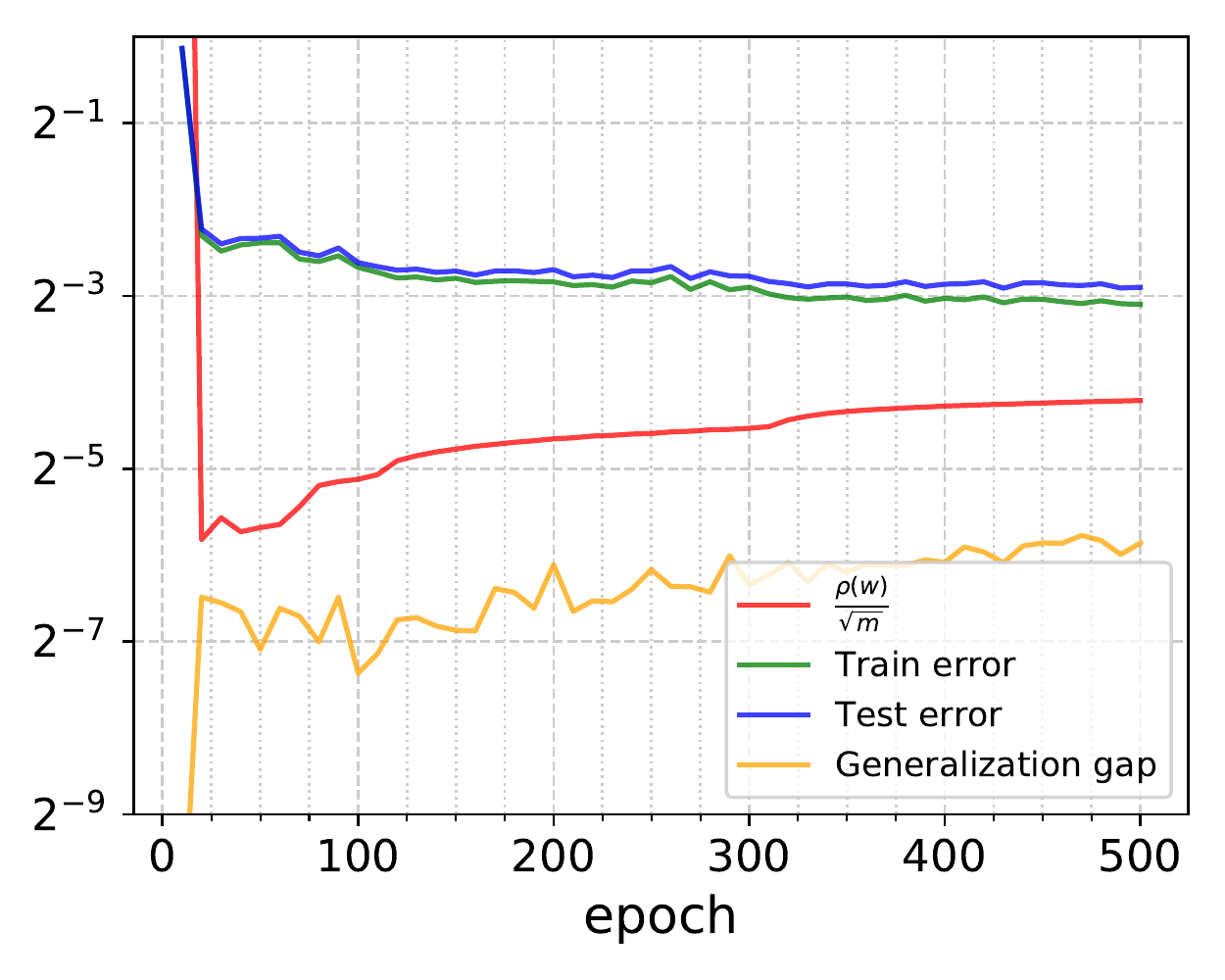} & 
\includegraphics[width=0.3\linewidth]{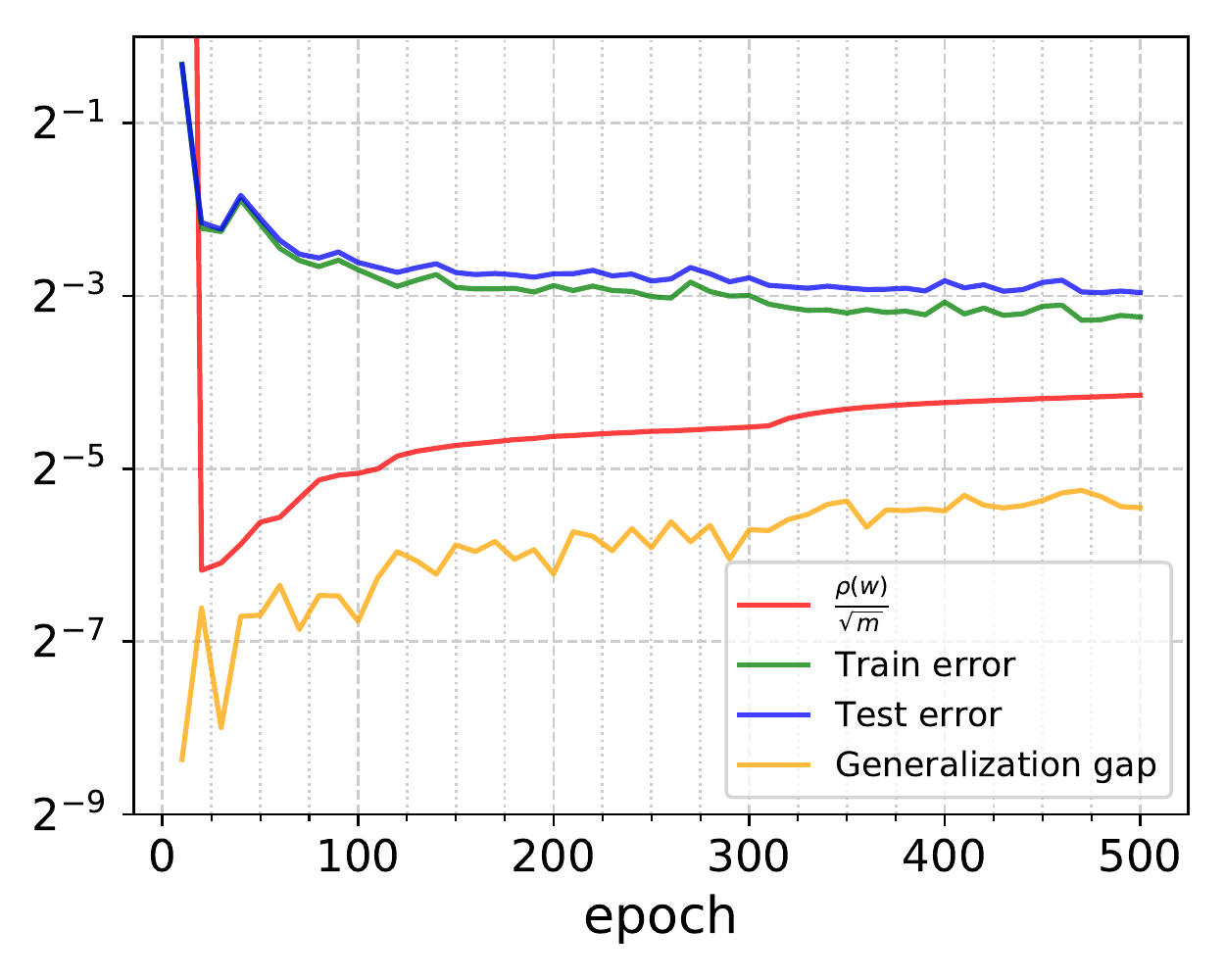} & 
\includegraphics[width=0.3\linewidth]{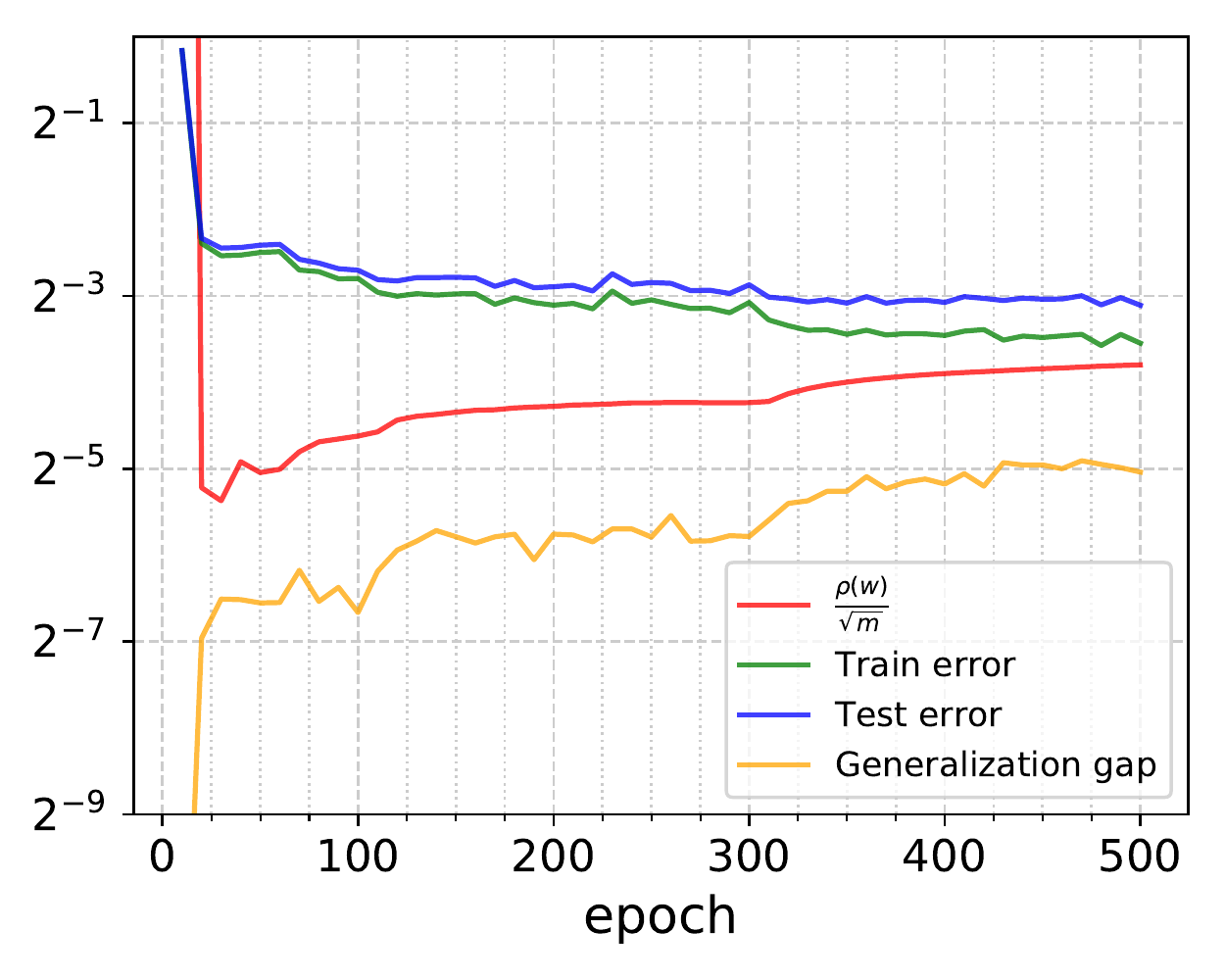}\\
$H=200$ & $H=500$ & $H=2000$
\end{tabular}
\caption{{\bf Varying the number of channels.}  We report $\fr{\rho(w)}{\sqrt{m}}$, the train error $\err_{S}(f_w)$, the test error $\err_{P}(f_w)$ and the generalization gap $\vert \err_{P}(f_w)-\err_{S}(f_w)\vert$ of CONV-$8$-$H$ trained on Fashion-MNIST with a varying number of channels. We trained the models with batch size 128 and learning rate $\mu=1$. For the top plots we used $\lambda=\expnumber{1}{-3}$, and for the bottom ones we used $\lambda=\expnumber{2}{-3}$.}
\label{fig:fmnist_width}
\end{figure}


\section{Proofs}\label{app:proofs}

\begin{lemma}[Peeling Lemma]
Let $\sigma$ be a 1-Lipschitz, positive-homogeneous activation function which is applied element-wise
(such as the ReLU). Then for any class of vector-valued functions $\mathcal{F} \subset \{f = (f_1,\dots,f_q) \mid \forall j \in [q]:~f_j:\mathbb{R}^d \to \mathbb{R}^p\}$, and any convex and monotonically increasing function $g : \mathbb{R} \to [0,\infty)$, we have
\begin{small}
\begin{equation*}
\E_{\xi} \sup_{\substack{f\in \mathcal{F} \\ W_j:~\|W_j\|\leq R}} g\left(\sqrt{\sum^{q}_{j=1} \left\|\sum^{m}_{i=1} \xi_i \cdot \sigma(W_j f_j(x_i))\right\|^2 }\right) ~\leq~ 
2\E_{\xi} \sup_{j \in [q],~f\in \mathcal{F}} g\left(\sqrt{q} R\left\|\sum^{m}_{i=1} \xi_i  \cdot f_j(x_i)\right\| \right).
\end{equation*}
\end{small}
\end{lemma}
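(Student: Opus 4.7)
I would follow the single-layer peeling template of~\cite{pmlr-v40-Neyshabur15,2017arXiv171206541G}, extended to handle the $\ell_2$ combination over the $q$ parallel components. The plan has two stages: first, scalarize the vector-valued structure inside $\sqrt{\sum_j \|\cdot\|^2}$, absorbing a factor $\sqrt{q}R$ via positive homogeneity of $\sigma$; then, strip off the outer $\sigma$ using the Ledoux--Talagrand contraction principle at the cost of a factor $2$.

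\textbf{Key steps.} For the first stage, I would decompose each $W_j$ into rows $\{w_j^k\}_{k=1}^p$ and write $w_j^k = \|w_j^k\|\,\tilde{w}_j^k$ with $\|\tilde{w}_j^k\|=1$ (handling zero rows separately). Positive homogeneity gives the scalar identity $\sigma(\langle w_j^k,f_j(x_i)\rangle)=\|w_j^k\|\,\sigma(\langle \tilde{w}_j^k,f_j(x_i)\rangle)$, so
\[
\sum_{j=1}^q \Bigl\|\sum_i \xi_i \sigma(W_j f_j(x_i))\Bigr\|^2 = \sum_{j,k} \|w_j^k\|^2 \Bigl(\sum_i \xi_i \sigma(\langle \tilde{w}_j^k,f_j(x_i)\rangle)\Bigr)^2.
\]
Upper-bounding each squared Rademacher sum by the joint maximum and using $\sum_{j,k}\|w_j^k\|^2 = \sum_j\|W_j\|^2 \leq qR^2$, this is at most $qR^2 \cdot \max_{j,k}(\cdots)^2$. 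Taking square roots, applying the monotone $g$, and commuting $\sup_{f,W}$ with $\max_{j,k}$ (both are suprema, and the $(j,k)$-th term only constrains $\tilde{w}_j^k$, which ranges over all unit vectors), the LHS is reduced to $\E_\xi \sup_{j,f,\tilde{w}:\|\tilde{w}\|\leq 1} g\bigl(\sqrt{q}R\,|\sum_i \xi_i \sigma(\langle \tilde{w},f_j(x_i)\rangle)|\bigr)$. For the second stage, I would set $\tilde{g}(x):=g(\sqrt{q}R\,x)$, still convex and monotone increasing, view the combined index $t=(j,f,\tilde{w})$, and apply the absolute-value form of the Ledoux--Talagrand contraction principle (Eq.\,4.20 in~\cite{Ledoux1991ProbabilityIB}) to the $1$-Lipschitz $\sigma$ with $\sigma(0)=0$ (e.g., ReLU); this replaces $\sigma(\langle \tilde{w},f_j(x_i)\rangle)$ by $\langle \tilde{w},f_j(x_i)\rangle$ inside the sup at the cost of a factor $2$. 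The dual characterisation $\sup_{\|\tilde{w}\|\leq 1}|\langle \tilde{w},v\rangle|=\|v\|$ then restores the vector norm and produces exactly the claimed RHS.

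\textbf{Main obstacle.} The delicate step relative to~\cite{2017arXiv171206541G} is the $\sqrt{q}$-absorption. A naive approach, bounding $\sqrt{\sum_j a_j^2}$ by $\sqrt{q}\max_j a_j$, pulling $\max_j$ out of $\E_\xi$ by a union bound, and then applying the single-term peeling lemma $q$ times, would pay an extra factor $q$ in the final bound, because $\E_\xi$ and $\max_j$ do not commute cleanly. The row-level decomposition above sidesteps this by encoding the constraint $\sum_{j,k}\|w_j^k\|^2 \leq qR^2$ \emph{jointly} across all rows, so that a single application of the contraction principle suffices and only the factor $2$ is paid. Verifying each sub-step (positive homogeneity handling of negative inner products, the swap of $\sup$ and $\max$ under convex monotone $g$, and the absolute-value variant of contraction for arbitrary index sets) is the technical heart of the argument.
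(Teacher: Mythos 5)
Your proposal is correct and follows essentially the same route as the paper's proof: row-wise decomposition of the $W_j$ via positive homogeneity, absorbing $\sqrt{q}R$ by bounding the weighted sum of squares by its maximum term under the joint norm constraint, then stripping $\sigma$ via the Ledoux--Talagrand contraction principle at the cost of a factor $2$, and finally recovering the vector norm by duality. The only (cosmetic) difference is that the paper derives the factor $2$ explicitly from $g(|z|)\leq g(z)+g(-z)$ together with the symmetry of $\xi$ before invoking the one-sided contraction inequality (Eq.~4.20), whereas you cite the absolute-value variant of contraction directly; both yield the same bound.
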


\begin{proof}
Let $W\in \mathbb{R}^{h\times p}$ be a matrix and let $w_1,\dots,w_h$ be the rows of the matrix $W$. Define a function $Q_j(w) := \left(\sum^{m}_{i=1} \xi_i \cdot \sigma(\fr{w^{\top}_r}{\|w_r\|} f_j(x_i))\right)^2$ for some fixed functions $f_j$. We notice that 
\begin{equation*}
\begin{aligned}
\sum^{q}_{j=1} \left\|\sum^{m}_{i=1} \xi_i \cdot \sigma(W_j f_j(x_i))\right\|^2 
~&=~ 
\sum^{q}_{j=1} \sum^{h}_{r=1} \|w_{jr}\|^2 \left(\sum^{m}_{i=1} \xi_i \cdot \sigma(\fr{w^{\top}_{jr}}{\|w_{jr}\|} f_j(x_i))\right)^2 \\
~&=~ 
\sum^{q}_{j=1}\sum^{h}_{r=1} \|w_{jr}\|^2 \cdot Q_j(\fr{w_{jr}}{\|w_{jr}\|}). \\
\end{aligned}
\end{equation*}
For any $w_{j1},\dots,w_{jh}$, we have 
\begin{equation}\label{eq:Rbound}
\sum^{h}_{r=1} \|w_{jr}\|^2 \cdot Q_j(\fr{w_{jr}}{\|w_{jr}\|}) ~\leq~ R \cdot \max_{r} Q_j(\fr{w_{jr}}{\|w_{jr}\|}),
\end{equation}
which is obtained for $\hat{w}_{j1},\dots,\hat{w}_{jh}$, where $\hat{w}_{ji}=0$ for all $i \neq r^*$ and $\hat{w}_{jr^*}$ of norm $R$ for some $r^* \in [h]$. Together with the fact that $g$ is a monotonically increasing function, we obtain
\begin{small}
\begin{equation*}
\begin{aligned}
\E_{\xi} \sup_{\substack{f\in \mathcal{F} \\ W_j:~\|W_j\|\leq R}} g\left(\sqrt{\sum^{q}_{j=1} \left\|\sum^{m}_{i=1} \xi_i \cdot \sigma(W_j f_j(x_i))\right\|^2 }\right)
~&\leq~ \E_{\xi} \sup_{\substack{f\in \mathcal{F} \\ w_1\dots,w_q:~\|w_j\| = R}} g\left(\sqrt{\sum^{q}_{j=1} \big\vert\sum^{m}_{i=1} \xi_i  \cdot \sigma(w^{\top}_j f_j(x_i))\big\vert^2 }\right) \\
~&\leq~ 
\E_{\xi} \sup_{\substack{j \in [q],~f\in \mathcal{F} \\ w_1\dots,w_q:~\|w_j\| = R}} g\left(\sqrt{q \cdot \big\vert\sum^{m}_{i=1} \xi_i  \cdot \sigma(w^{\top}_j f_j(x_i))\big\vert^2 }\right) \\
~&=~ 
\E_{\xi} \sup_{\substack{j \in [q],~f\in \mathcal{F} \\ w:~\|w\| = R}} g\left(\sqrt{q} \cdot \big\vert\sum^{m}_{i=1} \xi_i  \cdot \sigma(w^{\top} f_j(x_i))\big\vert \right).
\end{aligned}
\end{equation*}
\end{small}
Since $g(|z|) \leq g(z)+g(-z)$,
\begin{small}
\begin{equation*}
\begin{aligned}
\E_{\xi} \sup_{\substack{j \in [q],~f\in \mathcal{F} \\ w:~\|w\| = R}} g\left(\sqrt{q} \cdot \big\vert\sum^{m}_{i=1} \xi_i  \cdot \sigma(w^{\top} f_j(x_i))\big\vert \right) ~&\leq~ \E_{\xi} \sup_{\substack{j \in [q],~f\in \mathcal{F} \\ w:~\|w\| = R}} g\left(\sqrt{q} \cdot \sum^{m}_{i=1} \xi_i  \cdot \sigma(w^{\top} f_j(x_i)) \right) \\
&\qquad + \E_{\xi} \sup_{\substack{j \in [q],~f\in \mathcal{F} \\ w:~\|w\| = R}} g\left(-\sqrt{q} \cdot \sum^{m}_{i=1} \xi_i  \cdot \sigma(w^{\top} f_j(x_i)) \right) \\
~&=~ 2\E_{\xi} \sup_{\substack{j \in [q],~f\in \mathcal{F} \\ w:~\|w\| = R}} g\left(\sqrt{q} \cdot \sum^{m}_{i=1} \xi_i  \cdot \sigma(w^{\top} f_j(x_i)) \right),
\end{aligned}
\end{equation*}
\end{small}
where the last equality follows from the symmetry in the distribution of the $\xi_i$ random variables. By Equation 4.20 in~\cite{Ledoux1991ProbabilityIB}, the right-hand side can be upper bounded as follows
\begin{small}
\begin{equation*}
\begin{aligned}
2\E_{\xi} \sup_{\substack{j \in [q],~f\in \mathcal{F} \\ w:~\|w\| = R}} g\left(\sqrt{q} \cdot \sum^{m}_{i=1} \xi_i  \cdot \sigma(w^{\top} f_j(x_i)) \right) ~&\leq~ 2\E_{\xi} \sup_{\substack{j \in [q],~f\in \mathcal{F} \\ w:~\|w\| = R}} g\left(\sqrt{q} \cdot \sum^{m}_{i=1} \xi_i  \cdot w^{\top} f_j(x_i) \right) \\
~&\leq~ 2\E_{\xi} \sup_{\substack{j \in [q],~f\in \mathcal{F} \\ w:~\|w\| = R}} g\left(\sqrt{q} \cdot \|w\|\left\|\sum^{m}_{i=1} \xi_i  \cdot f_j(x_i)\right\| \right) \\
~&\leq~ 2\E_{\xi} \sup_{j \in [q],~f\in \mathcal{F}} g\left(\sqrt{q} R\left\|\sum^{m}_{i=1} \xi_i  \cdot f_j(x_i)\right\| \right). 
\end{aligned}
\end{equation*}
\end{small}
\end{proof}


\begin{proposition}
Let $G$ be a neural network architecture of depth $L$ and let $\rho > 0$. Let $X=\{x_i\}^{m}_{i=1}$ be a set of samples. Then,  
\begin{small}
\begin{equation*}
\begin{aligned}
\mathcal{R}_X(\mathcal{F}_{G,\rho}) ~&\leq~ \frac{\rho}{m} \cdot \left(1+\sqrt{2L\log(2\textnormal{deg}(G))}\right) \cdot \sqrt{\max_{j_1,\dots,j_L}\prod^{L}_{l=1} |\textnormal{pred}(l,j_{L-l})| \cdot \sum^{m}_{i=1}\|z^0_{j_L}(x_i)\|^2},
\end{aligned}
\end{equation*}
\end{small}
 where the maximum is taken over $j_1,\dots,j_L$, such that, $j_{L-l+1} \in \textnormal{pred}(l,j_{L-l})$ for all $l \in [L]$.
\end{proposition}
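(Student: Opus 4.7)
The plan is to extend the exponentiation-and-peeling strategy of \cite{pmlr-v40-Neyshabur15,2017arXiv171206541G} so that, at each peeling step, we exploit the fact that each neuron $z^l_j$ depends only on the small predecessor set $v^{l-1}_j$ rather than on the entire preceding layer. First, I would exponentiate and apply Jensen's inequality to move the supremum inside the expectation, obtaining
\begin{equation*}
m\mathcal{R}_X(\mathcal{F}_{G,\rho}) ~\leq~ \tfrac{1}{\lambda}\log \E_\xi \sup_{f_w \in \mathcal{F}_{G,\rho}} \exp\left(\lambda \left\vert\sum_{i=1}^m \xi_i f_w(x_i)\right\vert\right),
\end{equation*}
for an arbitrary $\lambda > 0$ to be optimized at the end. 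The first peeling step is then done by hand: writing $f_w(x) = w^L_1 \sigma(v^{L-1}_1(x))$ and applying Cauchy--Schwarz converts the inner quantity into $\|w^L_1\|_2$ times the square root of a sum, over $j_1 \in \textnormal{pred}(L,1)$, of $\|\sum_i \xi_i \sigma(z^{L-1}_{j_1}(x_i))\|^2$.

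From there I would apply Lemma~\ref{lem:peeling} iteratively for $l = L-1, L-2, \dots, 1$. At the $l$th application the lemma handles precisely the bracketed form produced by the previous step: it takes the square-rooted sum over $j \in \textnormal{pred}(l,j_{L-l})$, introduces a supremum over a single index $j_{L-l+1}$ in that set, and extracts a factor of $2\cdot\sqrt{|\textnormal{pred}(l,j_{L-l})|}\cdot \max_{j}\|w^l_{j}\|_F$. After all $L$ peelings the product of the extracted kernel norms is bounded by $\rho$, the combinatorial factor becomes $\sqrt{\prod_{l=1}^L |\textnormal{pred}(l, j_{L-l})|}$, and the expression collapses to a $2^L$ multiple of
\begin{equation*}
\E_\xi \sup_{j_1,\dots,j_L} \exp\left(\lambda\rho\sqrt{\prod_{l=1}^L |\textnormal{pred}(l,j_{L-l})|}\cdot \left\|\sum_{i=1}^m \xi_i z^0_{j_L}(x_i)\right\|\right),
\end{equation*}
with the supremum ranging over admissible paths $j_{L-l+1} \in \textnormal{pred}(l,j_{L-l})$.

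To finish, I would union-bound the sup over paths by a sum of at most $\textnormal{deg}(G)^L$ terms and bound each inner expectation by the standard sub-Gaussian inequality $\E_\xi \exp(\lambda \|\sum_i \xi_i v_i\|) \le 2\exp(\lambda^2 \sum_i \|v_i\|^2/2)$ applied to $v_i = z^0_{j_L}(x_i)$. Taking the logarithm, dividing by $\lambda$, and selecting $\lambda$ to balance the resulting quadratic term against the $\log(\textnormal{deg}(G)^L)$ contribution from the union bound produces precisely the additive $1 + \sqrt{2L\log(2\textnormal{deg}(G))}$ factor together with the $\sqrt{\max_{j_1,\dots,j_L}\prod_l |\textnormal{pred}(l,j_{L-l})| \cdot \sum_i \|z^0_{j_L}(x_i)\|^2}$ factor, and division by $m$ yields the claim.

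The hard part will be the index bookkeeping and, relatedly, ensuring the extracted quantity at each layer is the per-neuron kernel norm $\max_j\|w^l_j\|_F$ rather than some aggregated Frobenius norm of a Toeplitz-style lifted operator. This is where the absence of weight sharing matters and where the bound improves on \cite{2017arXiv171206541G}: because each $w^l_j$ can be independently trained, the supremum inside the peeling lemma must be taken over neuron indices, producing the sparsity-dependent combinatorial term $\prod_l|\textnormal{pred}(l,j_{L-l})|$ in place of the full $\prod_l d_l$ one would obtain by treating the convolution as a generic linear map. Absorbing the $\textnormal{deg}(G)^L$ path count inside the outer logarithm after Jensen, rather than paying it multiplicatively, is what delivers the $\sqrt{L\log\textnormal{deg}(G)}$ dependence in place of an exponential-in-$L$ blow-up.
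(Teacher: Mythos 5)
Your proposal follows essentially the same route as the paper's proof: Jensen's inequality after exponentiation, a first Cauchy--Schwarz step to extract $\|w^L_1\|_2$, iterated application of the modified peeling lemma to extract per-neuron kernel norms and the $\sqrt{|\textnormal{pred}(l,j_{L-l})|}$ factors, a union bound over the at most $\textnormal{deg}(G)^L$ admissible paths absorbed inside the logarithm, a sub-Gaussian moment bound on $\|\sum_i \xi_i z^0_{j_L}(x_i)\|$, and optimization of $\lambda$. The only cosmetic differences are that the paper first rescales the weights by homogeneity so each inner kernel has unit norm rather than carrying $\max_j\|w^l_j\|_F$ through the peeling, and it uses the concentration form $\E_\xi\exp(\alpha\|\sum_i\xi_i v_i\|)\leq\exp(\alpha^2\sum_i\|v_i\|^2/2+\alpha\sqrt{\sum_i\|v_i\|^2})$ (whose additive mean term is what yields the ``$1+$'' in the final factor) rather than your multiplicative-constant variant.
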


\begin{proof}
Due to the homogeneity of the ReLU function, each function $f_w \in \mathcal{F}_{G,\rho}$ can be rewritten as $f_{\hat{w}}$, where $\hat{w}^L_1 := \rho \fr{w^L_1}{\|w^L_1\|_2}$ and for all $l < L$ and $j_l \in [d_l]$, $\hat{w}^l_{j_l} := \fr{w^l_{j_l}}{\max_{j \in [d_l]} \|w^l_{j}\|_F}$. In particular, we have $\mathcal{F}_{G,\rho}\subset \hat{\mathcal{F}}_{G,\rho} := \{f_w\mid \|w^L_1\|_2\leq \rho \textnormal{ and } \forall i< L,j_l \in [d_l]:~ \|w^l_{j_l} \|_F\leq 1\}$ since the ReLU function is homogeneous. For simplicity, we denote by $f_{\tilde{w}}$ an arbitrary member of $\tilde{\mathcal{F}}_{G,\rho}$ and $\hat{w}^l_{j_l}$ the weights of the $j_l$th neuron of the $l$th layer. In addition, we denote $v^{l}_{j_1}(x_i) = (z^{l}_{j_2}(x_i))_{j_2 \in \textnormal{pred}(L,j_1)}$ and $z^{l}_{j}(x_i) = \sigma(\hat{w}^{l}_{j} v^{l-1}_{j}(x_i))$ and we denote $j_0=1$.

We apply Jensen’s inequality, 
\begin{small}
\begin{equation*}
\begin{aligned}
m\mathcal{R} ~:=~ m \mathcal{R}_{X}(\hat{\mathcal{F}}_{G,\rho}) ~=~ \E_{\xi}\left[\sup_
{\hat{w}}\sum_{i=1}^{m} \xi_{i} f_
{\hat{w}}\left(x_{i}\right) \right] ~\leq~ \fr{1}{\lambda} \log \E_{\xi} \sup_
{\hat{w}} \exp\left( \lambda \sum_{i=1}^{m} \xi_{i} f_
{\hat{w}}\left(x_{i}\right) \right),
\end{aligned}
\end{equation*}
\end{small}
 where the supremum is taken over the weights $\hat{w}^l_{j_l}$ ($l \in [L]$, $j_l \in [d_l]$) that are described above. Since $\|\hat{w}^L_{j_0}\|_{2}\leq \rho$, we have
\begin{equation*}
\begin{aligned}
m\mathcal{R} ~&\leq~ \fr{1}{\lambda} \log \E_{\xi} \sup_{\hat{w}} \exp\left( \lambda \sum_{i=1}^{m} \xi_{i} f_{\hat{w}}\left(x_{i}\right) \right) \\
~&=~ \fr{1}{\lambda} \log \E_{\xi} \sup_{\hat{w}} \exp\left( \lambda \left|\sum_{i=1}^{m} \xi_{i} \cdot \hat{w}^L_{j_0}\cdot v^{L-1}_{j_0}(x_{i})\right| \right) \\
~&\leq~ \fr{1}{\lambda} \log \left(\E_{\xi} \sup_{\hat{w}} \exp\left( \lambda \rho \cdot \sqrt{\left\| \sum_{i=1}^{m} \xi_{i} \cdot v^{L-1}_{j_0}(x_{i})\right\|^2} \right) \right).\\
\end{aligned}
\end{equation*}
Next, we use Lemma~\ref{lem:peeling},
\begin{small}
\begin{equation*}
\begin{aligned}
m\mathcal{R} ~&\leq~ \fr{1}{\lambda} \log \left(\E_{\xi} \sup_{\hat{w}} \exp\left( \lambda \rho \cdot \sqrt{\sum_{j_{1} \in \textnormal{pred}(L,j_0)} \left\| \sum_{i=1}^{m} \xi_{i} \cdot z^{L-1}_{j_1}(x_{i})\right\|^2} \right) \right) \\
~&=~ \fr{1}{\lambda} \log \left(\E_{\xi} \sup_{\hat{w}} \exp\left( \lambda \rho \cdot \sqrt{ \sum_{j_{1} \in \textnormal{pred}(L,j_0)} \left\| \sum_{i=1}^{m} \xi_{i} \cdot \hat{w}^{L-1}_{j_1}\cdot \sigma(v^{L-2}_{j_1}(x_{i}))\right\|^2} \right) \right) \\
~&\leq~ \fr{1}{\lambda} \log \left(2\E_{\xi} \sup_{j_1,~\hat{w}} \exp\left( \lambda \rho \cdot \sqrt{|\textnormal{pred}(L,j_0)| \cdot \left\| \sum_{i=1}^{m} \xi_{i} \cdot v^{L-2}_{j_1}(x_{i})\right\|^2} \right) \right) \\
~&=~ \fr{1}{\lambda} \log \left(2\E_{\xi} \sup_{j_1,~\hat{w}} \exp\left( \lambda \rho \cdot \sqrt{|\textnormal{pred}(L,j_0)| \sum_{j_2 \in \textnormal{pred}(L-1,j_1)}\cdot \left\| \sum_{i=1}^{m} \xi_{i} \cdot z^{L-2}_{j_2}(x_{i})\right\|^2} \right) \right), \\
\end{aligned}
\end{equation*}
\end{small}
where the supremum is taken over the parameters of $f_{\hat{w}}$ and $j_1 \in \textnormal{pred}(L,j_0)$. By applying this process recursively $L$ times, we obtain the following inequality,
\begin{small}
\begin{equation}\label{eq:proofEq1}
\begin{aligned}
m\mathcal{R} ~=~ \E_{\xi}\left[\sup_{\hat{w}} \sum_{i=1}^{m} \xi_{i} f_{\hat{w}}\left(x_{i}\right) \right] ~\leq~ \fr{1}{\lambda} \log \left(2^{L}\E_{\xi} \sup_{j_1,\dots,j_L} \exp\left(\lambda \rho \cdot \sqrt{\prod^{L}_{l=1}|\textnormal{pred}(l,j_{L-l})|}\cdot \left\| \sum_{i=1}^{m} \xi_{i} \cdot z^0_{j_L}(x_i)\right\|\right) \right), \\
\end{aligned}
\end{equation}
\end{small}
where the supremum is taken over $j_1,\dots,j_L$, such that, $j_{l+1} \in \textnormal{pred}(l,j_{L-l})$.
We notice that
\begin{small}
\begin{equation}\label{eq:proofEq2}
\begin{aligned}
&\E_{\xi} \sup_{j_1,\dots,j_{L}} \exp\left(\lambda \rho \cdot \sqrt{\prod^{L}_{l=1}|\textnormal{pred}(l,j_{L-l})|} \cdot \left\| \sum_{i=1}^{m} \xi_{i} \cdot z^0_{j_L}(x_i)\right\|\right)\\ ~&\leq~ \sum_{j_1,\dots,j_L}\E_{\xi} \exp\left(\lambda \rho \cdot \sqrt{\prod^{L}_{l=1}|\textnormal{pred}(l,j_{L-l})|}\cdot \left\| \sum_{i=1}^{m} \xi_{i} \cdot z^0_{j_L}(x_i)\right\|\right)\\
~&\leq~ \textnormal{deg}(G)^L \cdot \max_{j_1,\dots,j_L}\E_{\xi} \exp\left(\lambda \rho \cdot \sqrt{\prod^{L}_{l=1}|\textnormal{pred}(l,j_{L-l})|} \cdot \left\| \sum_{i=1}^{m} \xi_{i} \cdot z^0_{j_L}(x_i)\right\|\right).\\
\end{aligned}
\end{equation}
\end{small}
Following the proof of Theorem~1 in~\citep{2017arXiv171206541G}, by applying Jensen's inequality and Theorem 6.2 in~\citep{DBLP:books/daglib/0035704} we obtain that for any $\alpha > 0$,
\begin{small}
\begin{equation}\label{eq:proofEq3}
\E_{\xi} \exp\left(\alpha \left\| \sum_{i=1}^{m} \xi_{i} \cdot z^0_{j_L}(x_i)\right\|\right) ~\leq~ \exp\left(\frac{\alpha^2 \sum^{m}_{i=1}\|z^0_{j_L}(x_i)\|^2}{2} + \alpha \sqrt{\sum^{m}_{i=1}\|z^0_{j_L}(x_i)\|^2}\right).
\end{equation}
\end{small}
Hence, by combining equations~\ref{eq:proofEq1}-\ref{eq:proofEq3} with $\alpha=\lambda \rho \cdot \sqrt{\prod^{L}_{l=1}|\textnormal{pred}(l,j_{L-l})|}$, we obtain that
\begin{equation*}
\begin{aligned}
m\mathcal{R} ~&=~ \E_{\xi}\left[\sup_{f \in \cF}\sum_{i=1}^{m} \xi_{i} f\left(x_{i}\right) \right] \\
~&\leq~ \frac{1}{\lambda} \log \left((2\textnormal{deg}(G))^{L} \cdot \max_{j_1,\dots,j_L}\E_{\xi} \exp\left(\lambda \rho \cdot \sqrt{\prod^{L}_{l=1}|\textnormal{pred}(l,j_{L-l})|} \cdot \left\| \sum_{i=1}^{m} \xi_{i} \cdot z^0_{j_L}(x_i)\right\|\right)\right) \\
~&=~ \frac{1}{\lambda} \max_{j_1,\dots,j_L} \log \left((2\textnormal{deg}(G))^{L} \cdot \E_{\xi} \exp\left(\lambda \rho \cdot \sqrt{\prod^{L}_{l=1}|\textnormal{pred}(l,j_{L-l})|} \cdot \left\| \sum_{i=1}^{m} \xi_{i} \cdot z^0_{j_L}(x_i)\right\|\right)\right) \\
~&\leq~ \frac{\log(2\textnormal{deg}(G))L}{\lambda}  + \frac{\lambda\rho^2  \max\limits_{j_1,\dots,j_L} \prod^{L}_{l=1}|\textnormal{pred}(l,j_{L-l})| \cdot \sum^{m}_{i=1}\|z^0_{j_L}(x_i)\|^2}{2}  \\
&\qquad + \rho \sqrt{\max_{j_1,\dots,j_L} \prod^{L}_{l=1}|\textnormal{pred}(l,j_{L-l})| \cdot \sum^{m}_{i=1}\|z^0_{j_L}(x_i)\|^2}
\end{aligned}
\end{equation*}
The choice $\lambda = \sqrt{\fr{2\log(2\textnormal{deg}(G))L}{\rho^2 \max_{j_1,\dots,j_L}\prod^{L}_{l=1}|\textnormal{pred}(l,j_{L-l})|\cdot \sum^{m}_{i=1}\|z^0_{j_L}(x_i)\|^2}}$, yields the desired inequality.
\end{proof}

\begin{theorem}
Let $P$ be a distribution over $\mathbb{R}^{c_0 d_0} \times \{\pm 1\}$. Let $S = \{(x_i,y_i)\}^{m}_{i=1}$ be a dataset of i.i.d. samples selected from $P$. Then, with probability at least $1-\delta$ over the selection of $S$, for any $f_w \in \mathcal{F}_{G,\rho}$ that perfectly fits the data (for all $i \in [m]:~f_w(x_i)=y_i$), we have
\begin{small}
\begin{equation*}
\begin{aligned}
\err_P(f_w) ~&\leq~ \frac{(\rho(w)+1)}{m} \left(1+\sqrt{2L\log(2\textnormal{deg}(G))}\right)  \cdot \sqrt{\max\limits_{j_1,\dots,j_L}\prod^{L}_{l=1} |\textnormal{pred}(l,j_{L-l})| \sum^{m}_{i=1}\|z^0_{j_L}(x_i)\|^2} + 3\sqrt
\frac{\log(2(\rho(w)+2)^2/\delta)}{2m}
\end{aligned}
\end{equation*}
\end{small}
where the maximum is taken over $j_1,\dots,j_L$, such that, $j_{L-l+1} \in \textnormal{pred}(l,j_{L-l})$ for all $l \in [L]$.
\end{theorem}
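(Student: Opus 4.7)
The plan is to combine Proposition~\ref{prop:rademacher} with Lemma~\ref{lem:loss_ramp} at a fixed norm level, and then to upgrade the fixed-$\rho$ statement to a norm-adaptive bound depending on $\rho(w)$ by a countable union bound over a discretization of the norm scale. The statement of Proposition~\ref{prop:rademacher} requires a predeclared cap on $\rho$, whereas Theorem~\ref{thm:genbound} must hold for an arbitrary interpolating $f_w \in \mathcal{F}_G$; bridging this gap is the only substantive work.

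First, I fix $\rho > 0$ and specialize Lemma~\ref{lem:loss_ramp} to the class $\mathcal{F}_{G,\rho}$. With probability at least $1-\delta$ over the draw of $S$, every $f_w \in \mathcal{F}_{G,\rho}$ that perfectly fits $S$ satisfies
\begin{equation*}
\err_P(f_w) \;\leq\; 2\mathcal{R}_X(\mathcal{F}_{G,\rho}) + 3\sqrt{\tfrac{\log(2/\delta)}{2m}}.
\end{equation*}
Plugging in the Rademacher bound of Proposition~\ref{prop:rademacher} yields a fixed-$\rho$ inequality whose leading term is linear in $\rho$ and in the architectural quantity $\sqrt{\max_{j_1,\dots,j_L}\prod_l |\textnormal{pred}(l,j_{L-l})|\sum_i \|z^0_{j_L}(x_i)\|^2}$.

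Second, to remove the a priori dependence on a fixed $\rho$, I take a union bound over integer norm levels. For each $k \in \mathbb{N}$, let $A_k$ be the event that the fixed-$\rho$ statement above holds for $\mathcal{F}_{G,k}$ with confidence parameter $\delta_k := \delta / (k(k+1))$. Since $\sum_{k \geq 1} 1/(k(k+1)) = 1$, the union bound gives $\Pr[\bigcap_{k \geq 1} A_k] \geq 1 - \delta$. The telescoping choice of weights $\delta_k = \delta/(k(k+1))$ is the key technical point: it ensures that $\log(1/\delta_k)$ grows only logarithmically in $k$, so the additive $\sqrt{\log(1/\delta_k)/m}$ term inflates only mildly as the norm grows.

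Third, I convert this into the claimed data-dependent statement. Given any interpolating $f_w \in \mathcal{F}_G$, set $k := \lceil \rho(w) \rceil$, so that $f_w \in \mathcal{F}_{G,k}$ and $k \leq \rho(w) + 1$. Invoking event $A_k$ and substituting the cap $k$ for $\rho$ in the fixed-$\rho$ inequality bounds the leading complexity term by $(\rho(w)+1)/m$ times the architectural factor of Proposition~\ref{prop:rademacher}. For the additive term, $\log(2/\delta_k) = \log(2k(k+1)/\delta) \leq \log\big(2(\rho(w)+1)(\rho(w)+2)/\delta\big) \leq \log\big(2(\rho(w)+2)^2/\delta\big)$, which matches the logarithm appearing in the statement. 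Constant factors (such as the factor of $2$ in front of $\mathcal{R}_X$ from Lemma~\ref{lem:loss_ramp}) are absorbed in the standard manner into the leading term, and the remainder of the argument is routine bookkeeping.
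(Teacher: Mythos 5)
Your proposal is correct and takes essentially the same route as the paper's own proof: combine Lemma~\ref{lem:loss_ramp} with Proposition~\ref{prop:rademacher} at integer norm levels, take a union bound with weights $\delta/(t(t+1))$, and instantiate at $t=\lceil\rho(w)\rceil$ so that $t\leq\rho(w)+1$ and $\log(2t(t+1)/\delta)\leq\log(2(\rho(w)+2)^2/\delta)$. The one loose end you flag --- the factor of $2$ multiplying $\mathcal{R}_X$ in Lemma~\ref{lem:loss_ramp}, which cannot literally be ``absorbed'' into a non-asymptotic bound --- is also dropped without comment in the paper's own derivation, so your treatment is no weaker than theirs.
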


\begin{proof}
Let $t \in \mathbb{N}\cup \{0\}$ and $\mathcal{G}_{t} =  \mathcal{F}_{G,\rho}$. By Lemma~\ref{lem:loss_ramp}, with probability at least $1-\frac{\delta}{t(t+1)}$, for any function $f_w \in \mathcal{G}_{t}$ that perfectly fits the training data, we have
\begin{equation}
\err_P(f_w) ~\leq~ 2\mathcal{R}_{X}(\mathcal{G}_{t}) + 3\sqrt
\frac{\log(2(t+1)^2/\delta)}{2m}.
\end{equation} 
By Proposition~\ref{prop:rademacher}, we have
\begin{equation}
\mathcal{R}_X(\mathcal{G}_t) ~\leq~  t \cdot \left(1+\sqrt{2L\log(2\textnormal{deg}(G))}\right) \cdot \sqrt{\max_{j_1,\dots,j_L}\prod^{L}_{l=1} |\textnormal{pred}(l,j_{L-l})| \cdot \sum^{m}_{i=1}\|z^0_{j_L}(x_i)\|^2}
\end{equation}
because of the union bound over all $t \in \mathbb{N}$, \eqref{eq:Radbound} holds uniformly for all $t \in \mathbb{N}$ and $f_w \in \mathcal{G}_t$ with probability at least $1-\delta$. For each $f_w$ with norm $\rho(w)$ we then apply the bound with $t = \ceil{\rho(w)}$ since $f_w \in \mathcal{G}_{t}$, and obtain,
\begin{equation*}
\begin{aligned}
\err_P(f_w) ~&\leq~ \frac{t \left(1+\sqrt{2L\log(2\textnormal{deg}(G))}\right) \sqrt{\max\limits_{j_1,\dots,j_L}\prod^{L}_{l=1} |\textnormal{pred}(l,j_{L-l})| \sum^{m}_{i=1}\|z^0_{j_L}(x_i)\|^2}}{m} \\
&\quad + 3\sqrt
\frac{\log(2(t+1)^2/\delta)}{2m} \\
~&\leq~ \frac{(\rho(w)+1) \left(1+\sqrt{2L\log(2\textnormal{deg}(G))}\right) \sqrt{\max\limits_{j_1,\dots,j_L}\prod^{L}_{l=1} |\textnormal{pred}(l,j_{L-l})| \sum^{m}_{i=1}\|z^0_{j_L}(x_i)\|^2}}{m} \\
&\quad + 3\sqrt
\frac{\log(2(\rho(w)+2)^2/\delta)}{2m}, \\
\end{aligned}
\end{equation*} 
which proves the desired bound.
\end{proof}

\end{document}